\newcommand{\bfs}{\mathbf{s}}
\newcommand{\bfu}{\mathbf{u}}
\newcommand{\bfv}{\mathbf{v}}
\newcommand{\bfw}{\mathbf{w}}
\newcommand{\bfx}{\mathbf{x}}
\newcommand{\bfy}{\mathbf{y}}
\newcommand{\bfz}{\mathbf{z}}
\newcommand{\bfmu}{\boldsymbol{\mu}}
\newcommand{\bfI}{\mathbf{I}}
\newcommand{\bbE}{\mathbb{E}}
\newcommand{\bbR}{\mathbb{R}}
\newcommand{\calD}{\mathcal{D}}
\newcommand{\calJ}{\mathcal{J}}
\newcommand{\calK}{\mathcal{K}}
\newcommand{\calL}{\mathcal{L}}
\newcommand{\calN}{\mathcal{N}}
\newcommand{\calO}{\mathcal{O}}
\newtheorem{proposition}{Proposition}
\newtheorem{lemma}{Lemma}
\newtheorem{definition}{Definition}
\newtheorem{definition*}{Definition}
\newtheorem{problem*}{Problem}
\definecolor{LightCyan}{rgb}{0.88,1,1}
\DeclareMathOperator*{\diag}{diag}
\DeclareMathOperator*{\per}{per}
\DeclareMathOperator*{\KL}{KL}
\newcommand{\myparagraph}[1]{\vspace*{0.3ex}\noindent\textbf{#1 }}
\newcommand{\prl}[1]{\left(#1\right)}
\newcommand{\revision}[1]{{\color{blue}#1}}
\let\titleold\title
\renewcommand{\title}[1]{\titleold{#1}\newcommand{\thetitle}{#1}}
\def\maketitlesupplementary
\begin{document}

\title{
PKF: Probabilistic Data Association Kalman Filter for Multi-Object Tracking
}

\author{Hanwen Cao$^{1}$, George J. Pappas$^{2}$, Nikolay Atanasov$^{1}$ 
\thanks{Manuscript received: April, 16, 2025; Revised July, 14, 2025; Accepted August, 25, 2025.} 
\thanks{This paper was recommended for publication by Editor Lucia Pallottino upon evaluation of the Associate Editor and Reviewers' comments.
We gratefully acknowledge support from NSF FRR CAREER 2045945 and ARL DCIST CRA W911NF-17-2-0181.}
\thanks{$^{1}$Hanwen Cao and Nikolay Atanasov are with the Department of Electrical and Computer Engineering, University of California San Diego, La Jolla, CA 92093, USA, e-mails: {\tt\small\{h1cao,\allowbreak natanasov\}@ucsd.edu}.}%
\thanks{$^{2}$George J. Pappas is with the
Department of Electrical and Systems Engineering, University of Pennsylvania, Philadelphia, PA 19104, USA, e-mail: {\tt\small pappasg@seas.upenn.edu}.}%
\thanks{Digital Object Identifier (DOI): see top of this page.}
}


\markboth{IEEE Robotics and Automation Letters. Preprint Version. Accepted August, 2025}
{Cao \MakeLowercase{\textit{et al.}}: PKF: Probabilistic Data Association Kalman Filter}

\maketitle

\begin{abstract}

In this paper, we derive a new Kalman filter (KF) with probabilistic data association between measurements and states. We formulate a variational inference problem to approximate the posterior density of the state conditioned on the measurement data. We view the unknown data association as a latent variable and apply Expectation Maximization (EM) to obtain a filter with the update step in the same form as the Kalman filter but with an expanded measurement vector of all potential associations. We show that the association probabilities can be computed as permanents of matrices with measurement likelihood entries. We name our probabilistic data association Kalman filter the \emph{PKF} with P emphasizing both the \emph{\textbf{p}robabilistic} nature of the data association and the matrix \emph{\textbf{p}ermanent} computation of the association weights. 
We compare PKF with the well-established Probabilistic Multi-Hypothesis Tracking (PMHT) and Joint Probabilistic Data Association Filter (JPDAF) in both theory and simulated experiments. The experiments show that we can achieve lower tracking errors than both.
We also demonstrate the effectiveness of our filter in multi-object tracking (MOT) on multiple real-world datasets, including MOT17, MOT20, and DanceTrack. We can achieve comparable tracking results with previous KF-based methods without using velocities or doing multi-stage data association and remain real-time. 
We further show that our PKF can serve as a backbone for other KF-based trackers by applying it to a method that uses varieties of features for association, and improving its results.

\end{abstract}

\begin{IEEEkeywords}
Visual Tracking, Probability and Statistical Methods, Computer Vision for Automation
\end{IEEEkeywords}

\section{Introduction}
\label{sec:intro}

\IEEEPARstart{I}{n} estimation tasks where ambiguity exists between measurements and variables of interest, probabilistic data association can prevent catastrophic estimation failures.
For example, in multi-object tracking (MOT), there can be a lot of occlusions, causing high ambiguity in data association. An illustration of how probabilistic data association improves MOT is shown in Figure~\ref{fig:teaser}. Methods utilizing the Kalman filter (KF)~\cite{sort,bytetrack,ocsort,yang2024hybrid} have achieved outstanding performance in MOT but have not considered the impact of probabilistic data association on the tracking process.

We derive a new form of Kalman filter with probabilistic data association. Previous works~\cite{barfoot2020exactly,Cao_MultiRobotSLAM_RAL24} show that Kalman filter can be derived using variational inference (VI) by maximizing the evidence lower bound (ELBO) of the input and measurement likelihood. To deal with ambiguous associations, we approach the VI problem using Expectation-Maximization (EM) with the data association as the latent variable. In the E-step, we show that association weights can be computed as \emph{permanents} of matrices with measurement likelihood entries using accelerated algorithms \cite{nijenhuis2014combinatorial,huber2008fast}. We also show that the weight computation can be extended to cases with missing or false detections. In the M-step, optimizing the EM objective leads to a Kalman filter with  the usual prediction and update steps but with an extended measurement vector and noise covariance matrix capturing all possible data associations. Our formulation is related to Probabilistic Multi-Hypothesis Tracking (PMHT)~\cite{pmht} and the Joint Probabilistic Data Association Filter (JPDAF) \cite{jpdaf}. PMHT assumes independence of measurements and tracks in both E-step and M-step, which can easily be violated in some problems.
The JPDAF computes a posterior given each associated measurement, i.e., maximizes the ELBO of each measurement, and then computes the weighted average of the posteriors. Our filter, instead, directly optimizes the overall ELBO weighted by the data association weights. We show that our algorithm achieves lower tracking errors than both PMHT and JPDAF, while running at comparable speeds to JPDAF. 

\begin{figure}[t]
    \centering
    \includegraphics[width=\linewidth]{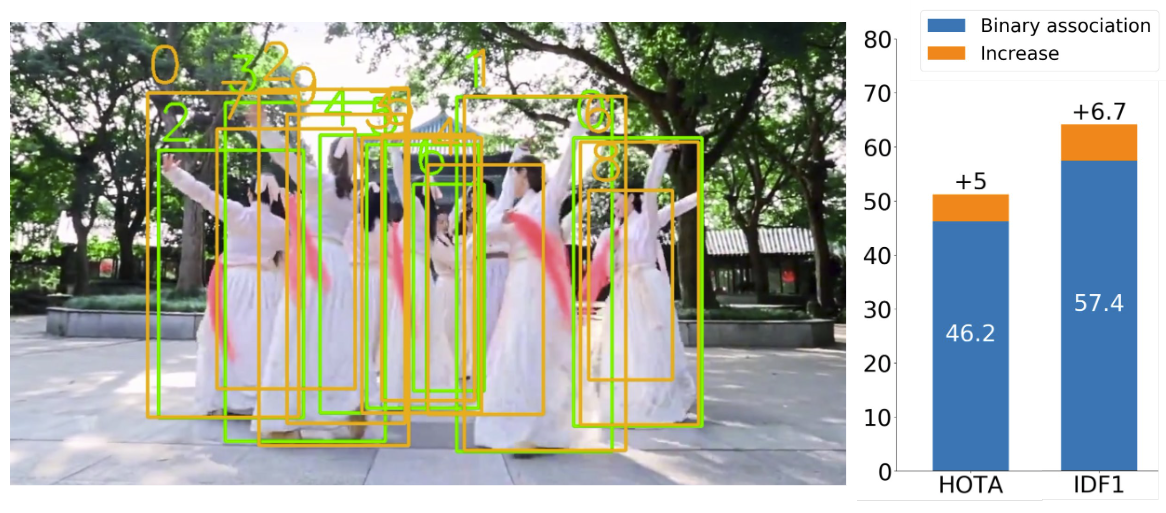}
    \caption{Example scene with high ambiguity in DanceTrack~\cite{dancetrack} with \textcolor{green}{green} box detections and \textcolor{orange}{orange} box tracks. Applying our probabilistic data association Kalman filter on this sequence significantly increases the association quality according to IDF1 \cite{IDF1}, 
    and the combined HOTA metric \cite{hota}.}
    \label{fig:teaser}
\end{figure}


We also apply our filter to MOT 
and test our algorithm on multiple real-world datasets, including MOT17~\cite{MOT17}, MOT20~\cite{MOT20}, and DanceTrack~\cite{dancetrack}. Without using techniques like multi-stage data association, KF re-updates, object recovery, or considering velocities during association, our method achieves comparable performance to previous KF-based methods \cite{bytetrack,ocsort} while maintaining almost the same inference speed. We further show the compatibility of our PKF to varieties of features for data association by applying it to Hybrid-SORT \cite{yang2024hybrid}, which uses the techniques from \cite{bytetrack,ocsort} in addition to using detection scores and neural features. By replacing the KF in \cite{yang2024hybrid} with our PKF, we can further improve its results. 
Our contributions are summarized as follows.

\begin{itemize}
    \item We formulate state estimation with ambiguous measurements as a VI problem and use EM to derive the PKF, a new Kalman filter with probabilistic data association.


    \item We show that association probabilities can be computed using matrix permanents, which can accelerate the association weight computation. 
    
    \item We demonstrate the effectiveness of the PKF in comparison to PMHT and JPDAF on simulated data. On multiple real-world MOT datasets, PKF 
    achieves comparable results with previous state-of-the-art algorithms by associating only bounding boxes, and can improve the results by replacing the KF part, 
    and runs at 250+ fps on a single laptop CPU, given offline detections\footnote{Code is at \url{https://github.com/hwcao17/pkf}.}.

\end{itemize}

\section{Related work}
\label{sec:related_work}


\subsection{Probabilistic data association}
Data association is a key challenge in estimation problems where measurements need to be related to estimated variables. The most straightforward association approach is by a nearest neighbor rule based on some distance metric, such as Euclidean or Mahalonobis \cite{kaess2008isam}. However, nearest-neighbor associations are prone to mistakes. The joint compatibility branch and bound (JCBB) algorithm \cite{jcbb} computes a joint Mahalonobis distance and performs a joint $\chi^2$ test for the association event to obtain robust associations. 
To deal with ambiguity, Bar-Shalom and Tse developed the probabilistic data association filter (PDAF) \cite{pdaf} for tracking single objects in the presence of clutter measurements. In the Kalman filter update step, the innovation terms are computed for each of the possible associations and are averaged weighted by the association probabilities. The performance of PDAF can degrade in the multi-object case because its separate association process can cause multiple tracks to latch onto the same object. The JPDAF \cite{jpdaf} extends the PDAF by associating all measurements and tracks jointly. In each association event, a measurement can be associated with at most one track, and vice versa. 
Musicki et al.~\cite{IPDA} develop an integrated probabilistic data association (IPDA) method using a Markov chain modeling probabilistic data association, track initialization, and termination. JIPDA~\cite{JIPDA} extends IPDA~\cite{IPDA} to the multi-object case by performing data association jointly. There are also improvements on the original JPDAF~\cite{jpdaf} like avoiding coalescence~\cite{blom2000probabilistic} and speeding up the inference \cite{yang2018linear,rezatofighi2015joint}. Meyer et al. \cite{meyer2018message} approach probabilistic data association using a factor graph where the variables (including objects and association events) and functions (motion, observation, et al.) are nodes and the edges model their relationships. The association probabilities can be computed efficiently by message passing in the graph. A survey of data association techniques can be found in \cite{survey}. Besides MOT, probabilistic data association is also applied in other estimation tasks like biological data processing~\cite{bio_tracking} and simultaneous localization and mapping \cite{bowman2017probabilistic,doherty2019multimodal}.

\subsection{Multi-object tracking}
State-of-the-art MOT algorithms are dominated by the tracking-by-detection approach, which consists of two steps: object detection \cite{faster_rcnn,yolo} and data association. SORT~\cite{sort} uses the Hungarian algorithm~\cite{hungarian} to associate bounding-box object detections and updates the object tracks using the Kalman filter to achieve real-time online tracking. DeepSORT~\cite{deepsort} extends SORT~\cite{sort} by introducing deep visual features for data association. ByteTrack~\cite{bytetrack} makes an observation that the detectors can make imperfect predictions in complex scenes and proposes a second round of data association for low-confidence detections. OC-SORT~\cite{ocsort} analyzes the limitations of SORT~\cite{ocsort}, namely sensitivity to state noise and temporal error magnification, and proposes an observation-centric re-update step and object momentum for data association.

A second class of algorithms for MOT performs tracking-by-regression. Feichtenhofer et al.~\cite{feichtenhofer2017detect} use a correlation function between the feature maps of two frames to predict the transformation of bounding boxes in different frames. Bergmann et al.~\cite{bergmann2019tracking} uses the regression head of Faster-RCNN~\cite{faster_rcnn} to predict bounding-box offsets from the previous image to the current image. CenterTrack \cite{centertrack} adopts the same network architecture as CenterNet \cite{centernet} but takes two consecutive images to predict object offsets between frames. Braso and Leal-Taixe \cite{braso2020learning} approach MOT with a graph neural network, which stores object features in the nodes and predicts whether two bounding boxes in different frames are the same object by edge features. 

A third set of methods performs tracking-by-attention \cite{trackformer} using transformer models \cite{transformer} for joint detection and tracking based on new object queries and tracked object queries, which solves the creation and association of objects implicitly. Although these methods utilize different techniques for data association, either explicit or implicit, they all consider deterministic data association. 


\section{Problem formulation}
\label{sec:problem_formulation}

Consider $N$ time-varying variables that need to be estimated, denoted as $\mathbf{x}_{t,j} \in \mathbb{R}^n$, $j=1, \ldots, N$. We refer to their concatenation into a single vector at time $t$:
\begin{equation}
    \bfx_t = \begin{bmatrix}
        \bfx_{t,1}^\top & \cdots & \bfx_{t,N}^\top
    \end{bmatrix}^\top \in \bbR^{nN}
\end{equation}
as the state of a dynamical system. Suppose that $\bfx_t$ evolves according to a discrete-time linear motion model:
\begin{equation}\label{eq:linear_motion_model}
    \bfx_{t+1} = F \bfx_t + G \bfu_t + \bfw_t, \qquad \bfw_t \sim \calN(\mathbf{0}, W),
\end{equation}
where $\bfu_t\in \bbR^l$ is the known input and $\bfw_t$ is zero-mean Gaussian noise with covariance $W$. At each time $t$, we receive $M_t$ measurements
\begin{equation}
    \bfz_t = \begin{bmatrix}
        \bfz_{t,1}^\top & \cdots & \bfz_{t,M_t}^\top
    \end{bmatrix}^\top \in \bbR^{mM_t}.
\end{equation}
The data association between $\bfz_t$ and $\bfx_t$ is defined as follows.

\begin{definition}
\label{def:data_association}
The \emph{data association} of $\bfz_t$ to $\bfx_t$ is a function $\delta_t: \{1,\ldots,M_t\} \to \{0,1,\ldots,N\}$ that either associates an element $\bfz_{t,k}$ to an element $\bfx_{t,\delta_t(k)}$ or indicates via $\delta_t(k) = 0$ that there is no matching element in $\bfx_t$.
\end{definition}

Given association $\delta_t(k)=j$, the measurement model is 
%
\begin{equation}\label{eq:KF_obs_model}
    \bfz_{t,k} = H \bfx_{t,j} + \bfv_t, \qquad \bfv_t \sim \calN(\mathbf{0}, V),
\end{equation}
where $\bfv_t$ is zero-mean Gaussian noise with covariance $V$. 
%
%

\begin{problem*}
Given an estimate of the state $\bfx_t$ at time $t$, an input $\bfu_t$, and measurements $\bfz_{t+1}$, obtain an estimate of $\bfx_{t+1}$ with \textit{unknown association} between the measurements $\bfz_{t+1,k}$ and the variables $\bfx_{t+1,j}$.
\end{problem*}


\section{Methodology}
\label{sec:methodology}

We formulate a Kalman filter with probabilistic data association to solve the estimation problem defined in Sec.~\ref{sec:problem_formulation}.

\subsection{Data association}




Following Definition~\ref{def:data_association}, $\delta_t$ is the data association at time $t$ indicating that measurement $\bfz_{t,k}$ is assigned to variable $\bfx_{t,\delta_t(k)}$.
We denote the set of all possible association functions at time $t$ as $\calD_t = \{ \delta_t: \{1, ..., M_t\} \to \{1, ..., N\} \}$ 
and the set of data associations across all times as $\mathcal{D} = \cup_{t=1}^T \calD_t$.

We consider a probabilistic setting, in which we are given a prior probability density $p(\bfx_t)$ of the state $\bfx_t$, and aim to compute the posterior density $p(\bfx_{t+1} | \bfz_{t+1}, \bfu_t)$ of $\bfx_{t+1}$ conditioned on the input $\bfu_t$ and the measurements $\bfz_{t+1}$. At each time $t$, we treat the data association $\delta_t$ as a random variable with a uniform prior $p(\delta_t)$ independent of $\bfx_t$. Using Bayes' rule and assuming the measurements are mutually independent conditioned on the variables that generated them, the density of $\delta_t$ conditioned on $\bfz_t$ 
%
satisfies:
\begin{equation}\label{eq:assoc_prob}
    p(\delta_t|\bfz_t) \! = \!\frac{p(\bfz_t |\delta_t) p(\delta_t)}{p(\bfz_t)} 
    \!\propto\! p(\bfz_t | \delta_t)  
    \!=\! \prod_{k=1}^{M_t} \!p\left(\bfz_{t,k} | \delta_t(k)\right).
\end{equation}
%
%
Let $p_{\calN}(\cdot; \bfmu, \Sigma)$ denote the density of a Gaussian distribution with mean $\bfmu$ and covariance $\Sigma$. Assuming Gaussian prior $p(\bfx_{t,\delta_t(k)}) \allowbreak =  p_{\calN}(\bfx_{t,\delta_t(k)}; \bfmu_{t,\delta_t(k)}, \Sigma_{t,\delta_t(k)})$ and measurement $p(\bfz_{t,k} | \bfx_{t,\delta_t(k)}) = p_{\calN}(\bfz_{t,k}; H\bfx_{t,\delta_t(k)}, V)$, we have:
\begin{equation*}
    p(\bfz_{t,k} | \delta_t(k)) = p_{\calN}( \bfz_{t,k};\! H\boldsymbol{\mu}_{t,\delta_t(k)}, H \Sigma_{t,\delta_t(k)} H^\top +  V ).
\end{equation*}
Consider the likelihood that a specific measurement $\bfz_{t,k}$ is generated by $\bfx_{t,j}$, denoted by $w^{t}_{k,j}$. Using \eqref{eq:assoc_prob}, we have:
\begin{align} 
    w^{t}_{k,j} = \!\!\!\! \sum_{\delta_t \in \calD_t(k,j)} \!\! p(\delta_t | \bfz_t) = \!\!\!\! \sum_{\delta_t \in \calD_t(k,j)} \prod_{r=1}^{M_t} p\left(\bfz_{t,r} | \delta_t(r)\right),
    \label{eq:weight_def} 
\end{align}
where $\calD_t(k,j)$ is the set of data association functions that assign measurement $\bfz_{t,k}$ to variable $\bfx_{t,j}$. We show that $w^{t}_{k,j}$ can be computed using the \emph{permanent} of a matrix $Q$ containing the measurement likelihood $p(\bfz_{t,k} | \bfx_{t,j})$ as its entries $Q(k,j)$. 

\begin{definition}
The permanent of a matrix $Q = \left[Q(k, j)\right] \in \mathbb{R}^{M \times N}$ with $M \leq N$ is:
\begin{equation}\label{eq:permanent_def}
    \per(Q) := \sum_{\delta \in \calD}\prod_{k=1}^{M} Q(k, \delta(k)),
\end{equation}
where $\calD$ is the set of injective functions $\delta : \{1,\ldots, M \} \to \{1,\ldots, N \}$.    
\end{definition}

Given this definition, we have the following proposition to compute the data association weights.


\begin{proposition}\label{prop:weight_compute}
    Let $Q^{t} \in \mathbb{R}^{M_t \times N}$ be a matrix with elements $Q^{t}(k, j) = p(\bfz_{t,k} | \delta_t(k)=j) = p(\bfz_{t,k} | \bfx_{t,j})$. The data association weight $w^{t}_{k,j}$ in 
    can be computed as:
    \begin{equation*} \label{eq:weight_compute}
    w^{t}_{k,j} \propto Q^{t}(k, j) \per\prl{Q^{t}_{-kj}},
    \end{equation*}
    where $Q^{t}_{-kj}$ is the matrix $Q^t$ with the $k$-th row and $j$-th column removed.
\end{proposition}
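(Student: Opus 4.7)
The plan is to start from the definition \eqref{eq:weight_def} and directly unfold the product. Every $\delta_t \in \calD_t(k,j)$ satisfies $\delta_t(k)=j$ by construction, so the factor $p(\bfz_{t,k}\mid \bfx_{t,j}) = Q^t(k,j)$ is common to every term in the sum and can be pulled out. What remains inside the sum is $\prod_{r \neq k} p(\bfz_{t,r}\mid\delta_t(r)) = \prod_{r\neq k} Q^t(r,\delta_t(r))$, a product over the $M_t - 1$ measurements other than $\bfz_{t,k}$.

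Next I would re-index the summation. A function $\delta_t \in \calD_t(k,j)$ is in one-to-one correspondence with an injective function $\tilde\delta : \{1,\ldots,M_t\}\setminus\{k\} \to \{1,\ldots,N\}\setminus\{j\}$, obtained by restricting $\delta_t$ to its remaining inputs and noting that the codomain value $j$ is already spoken for by $\delta_t(k)$. After relabeling so that row $k$ and column $j$ are dropped, the resulting sum $\sum_{\tilde\delta}\prod_{r\neq k} Q^t(r,\tilde\delta(r))$ matches Definition~\eqref{eq:permanent_def} applied to the $(M_t-1)\times(N-1)$ submatrix $Q^t_{-kj}$, and is therefore equal to $\per(Q^t_{-kj})$. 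Combining the two steps gives $w^t_{k,j} \propto Q^t(k,j)\,\per(Q^t_{-kj})$, with the omitted proportionality constant absorbing the $\bfz_t$-dependent normalizer $p(\bfz_t)$ that appears in \eqref{eq:assoc_prob} but does not depend on $(k,j)$.

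The one subtlety, which I expect to be the main obstacle, is the injectivity of $\delta_t$: the identification of the inner sum with a permanent is only valid if $\delta_t$ is restricted to assignments in which no two distinct measurements share the same variable. This is the standard ``at most one measurement per target per time step'' assumption of data association, and I would state it explicitly at the start of the proof, noting that under this convention the set $\calD_t(k,j)$ consists of exactly the injective maps with $\delta_t(k)=j$, so that the bijection with injections on the deleted index set is unambiguous. With that convention fixed, the remainder of the argument is purely a re-indexing of the definition of the permanent and contains no analytic content.
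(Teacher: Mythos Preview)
Your proposal is correct and follows essentially the same route as the paper: factor out the common term $Q^t(k,j)$, bijectively identify the remaining associations with injections $\{1,\ldots,M_t\}\setminus\{k\}\to\{1,\ldots,N\}\setminus\{j\}$, and recognize the resulting sum as $\per(Q^t_{-kj})$. Your explicit flagging of the injectivity assumption is a welcome clarification, since the paper's definition of $\calD_t$ does not state it outright even though the permanent identification depends on it.
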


\begin{proof}
    See Supplementary Material Section~\ref{sec:proof_weight_compute}.
\end{proof}

Proposition~\ref{prop:weight_compute} allows us to compute data association weights. Next, we derive our PKF.


\subsection{Probabilistic data association Kalman filter}

To derive a filter, we consider two time steps, namely $t$ and $t+1$, and define a lifted form of the state:
\begin{equation}
    \bar{\bfx} = \begin{bmatrix} \bfx_t^\top & \bfx_{t+1}^\top \end{bmatrix}^\top.
\end{equation}
Given the prior density $p(\bfx_t)$, the input $\bfu_t$, and the measurement $\bfz_{t+1}$, we aim to find an estimate $q(\bar{\bfx})$ of the joint posterior $p(\bar{\bfx} | \bfz_{t+1}, \bfu_{t})$. To determine $q(\bar{\bfx})$, we use variational inference \cite[Ch.~10]{bishop2006pattern}, which maximizes the lower bound of the evidence $\log p(\bfz_{t+1}, \bfu_t)$ with respect to $q(\bar{\bfx})$. To save space, we denote $q(\bar{\bfx})$ by $q$. The evidence can be decomposed as~\cite[Ch.~10]{bishop2006pattern}:
\begin{align}
    \log p(\bfz_{t+1}, \bfu_{t}) &= \KL(q||p(\bar{\bfx} | \bfz_{t+1}, \bfu_{t})) + \calL(q), \label{eq:evidence} \\
    \calL(q) &= \bbE_{q} \bigl[\log \frac{p(\bar{\bfx}, \bfz_{t+1}, \bfu_t)}{q(\bar{\bfx})}\bigr],
\end{align}
where $\calL(q)$ is the ELBO (evidence lower bound) since the Kullback–Leibler divergence $\KL(q||p) \geq 0$. Following standard practice \cite[Ch.~10]{bishop2006pattern}, we optimize the ELBO
%
\begin{equation}
	q^* \! \in \! \arg\max_q  \calL(q) \! = \! \arg\max_q \bbE_{q} \bigl[\log \frac{p(\bar{\bfx}, \bfz_{t+1}, \bfu_t)}{q(\bar{\bfx})}\bigr].
\end{equation}
%
%
To account for probabilistic data association, we formulate the optimization by the EM algorithm \cite[Ch.~9]{bishop2006pattern}. EM determines the maximum likelihood (ML) or maximum a posteriori (MAP) in the presence of unobserved variables. It contains an E-step that computes the expectation w.r.t. the unobserved variables and an M-step that maximizes the log-likelihood. Letting the data association $\delta_{t+1}$ be the unobserved variable, we consider the following problem:
%
\begin{align}
&q^{(i+1)} \in \arg\max_{q} f^{(i)}(q),  \label{eq:EM_problem} \\
&= \arg\max_{q} \mathbb{E}_{\delta_{t+1}} \bbE_{\bar{\bfx}^{(i)}} \! \left[ \calL(q, \delta_{t+1}) \mid \bar{\bfx}^{(i)}, \bfz_{t+1} \right] \notag  \\
&= \arg\max_{q}  \mathbb{E}_{\delta_{t+1}} \bbE_{\bar{\bfx}^{(i)}} \!\! \left[ \bbE_{q} \bigl[\log \! \frac{p(\bar{\bfx}, \bfz_{t+1}, \bfu_t, \delta_{t+1})}{q(\bar{\bfx})} \bigr] \!\! \mid \!\! \bar{\bfx}^{(i)}\!, \bfz_{t+1} \! \right] \notag \\
&= \arg\max_{q} \sum_{k=1}^{M_t} \sum_{j=1}^N  \omega^{t+1}_{kj} \log p(\mathbf{z}_{t+1, k} | \bfx_{t+1, j}), \notag
\end{align}
where the expectation $\bbE_{\bar{\bfx}^{(i)}}$ is with respect to $\bar{\bfx}^{(i)} \sim q^{(i)}(\bar{\bfx})$. EM splits the optimization in \eqref{eq:EM_problem} in two steps. The E-step requires computing the data association likelihood $p(\delta_{t+1} | \bfz_{t+1})$. This can be obtained from \eqref{eq:assoc_prob} and Proposition~\ref{prop:weight_compute}. Given the data association weights $w^{t}_{k,j}$, the M-step performs the optimization in \eqref{eq:EM_problem} to determine $q^{(i+1)}$.


We show that performing the E and M steps for one iteration, with initialization $\bar{\bfx}^{(i)}$ given by the prior state and the predicted state in \eqref{eq:linear_motion_model}, is equivalent to a Kalman filter with probabilistic data association. We first define an expanded observation model that captures all possible ways of generating the measurements at each time $t$:
\begin{equation}\label{eq:expanded_obs_model}
    \bar{\bfz}_{t} = \bar{H}_t \bfx_t + \bar{\bfv}_t, \qquad \bar{\bfv}_{t} \sim \calN(\mathbf{0}, \bar{V}_t),
\end{equation}
where $\bar{\bfz}_t = (I_{M_t} \otimes \mathbf{1}_{N} \otimes I_{m}) \bfz_t \in \bbR^{mM_tN}$, $\bar{H}_t = \mathbf{1}_{M_t} \otimes \bfI_N \otimes H \in \mathbb{R}^{mM_tN \times nN}$, $\mathbf{1}_N \in \bbR^N$ is a vector with elements equal to one, $\bfI_N \in \bbR^{N \times N}$ is the identity matrix, $\otimes$ is the Kronecker product, and $\bar{\bfv}_t$ is zero-mean Gaussian noise with covariance:
\begin{equation}\label{eq:expanded_obs_cov}
    {\setlength{\arraycolsep}{2pt} 
    \medmuskip = 1mu 
    \bar{V}_t \! = \!\! \begin{bmatrix}
                \bar{V}_{t,1} & \ & \ \\
                \ & \ddots & \ \\
                \ & \ & \bar{V}_{t,M_t}
                \end{bmatrix}\!\!,
    \bar{V}_{t,k} \! = \!\! \begin{bmatrix}
                    V \! / \! w^t_{k,1} & \ & \ \\
                    \ & \ddots & \ \\
                    \ & \ & V \! / \! w^t_{k,N}
                    \end{bmatrix}\!\!,}
\end{equation}
where $w^t_{k,j}$ are the data association weights in \eqref{eq:weight_def}. Using the expanded measurement model in \eqref{eq:expanded_obs_model}, we obtain a Kalman filter with probabilistic data association.

\begin{proposition}
\label{prop:KF}
Given prior $\bfx_t \sim \calN(\bfmu_t, \Sigma_t)$ and input $\bfu_t$, the predicted Gaussian distribution $\calN(\bfmu_{t+1}^+, \Sigma_{t+1}^+)$ of $\bfx_{t+1}$ computed by the Kalman filter with motion model in \eqref{eq:linear_motion_model} has parameters:
\begin{equation}
\begin{aligned}
    \bfmu_{t+1}^+ &= F \bfmu_t + G\bfu_t,\\
    \Sigma_{t+1}^+ &= F \Sigma_t F^\top + W.
\end{aligned}
\end{equation}
Given measurements $\bfz_{t+1}$, the updated Gaussian distribution $\calN(\bfmu_{t+1}, \Sigma_{t+1})$ of $\bfx_{t+1}$ conditioned on $\bfz_{t+1}$ computed by the Kalman filter with probabilistic data association is obtained from the expanded measurement model in \eqref{eq:expanded_obs_model} as:
\begin{equation}
\begin{aligned}
    \bfmu_{t+1} &= \bfmu_{t+1}^+ + \bar{K}_{t+1} (\bar{\bfz}_{t+1} - \bar{H}_{t+1} \bfmu_{t+1}^+),\\
    \Sigma_{t+1} &= (I - \bar{K}_{t+1} \bar{H}_{t+1}) \Sigma_{t+1}^+,
\end{aligned}
\end{equation}
where the Kalman gain is:
\begin{equation}
    \bar{K}_{t+1} = \Sigma_{t+1}^+ \bar{H}_{t+1}^\top \prl{\bar{H}_{t+1} \Sigma_{t+1}^+\bar{H}_{t+1}^\top + \bar{V}_{t+1}}^{-1}.
    \label{eq:Kalman_gain}
\end{equation}
\end{proposition}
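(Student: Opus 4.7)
The plan is to verify that executing one iteration of the EM procedure in \eqref{eq:EM_problem}, initialized with $\bar{\bfx}^{(i)}$ drawn from the prior $p(\bfx_t)$ and the one-step prediction of \eqref{eq:linear_motion_model}, reduces exactly to a Kalman filter applied to the expanded measurement model \eqref{eq:expanded_obs_model}--\eqref{eq:expanded_obs_cov}. The prediction equations for $\bfmu_{t+1}^+$ and $\Sigma_{t+1}^+$ require no new argument: Gaussian marginalization of $p(\bfx_{t+1} \mid \bfx_t, \bfu_t) p(\bfx_t)$ under the linear Gaussian dynamics \eqref{eq:linear_motion_model} produces $\calN(F\bfmu_t + G\bfu_t, F\Sigma_t F^\top + W)$, which matches the stated expressions verbatim.

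For the update step, the central calculation is to show that the M-step objective
\[
\sum_{k=1}^{M_{t+1}} \sum_{j=1}^N \omega^{t+1}_{kj} \log p(\bfz_{t+1,k} \mid \bfx_{t+1,j})
\]
coincides, up to an additive constant in $\bfx_{t+1}$, with $\log p(\bar{\bfz}_{t+1} \mid \bfx_{t+1})$ under the expanded model. Substituting the Gaussian density $p(\bfz_{t+1,k} \mid \bfx_{t+1,j}) = p_{\calN}(\bfz_{t+1,k}; H\bfx_{t+1,j}, V)$ turns the left side into a sum of weighted squared Mahalanobis terms $-\tfrac{1}{2} \omega^{t+1}_{kj} (\bfz_{t+1,k} - H\bfx_{t+1,j})^\top V^{-1} (\bfz_{t+1,k} - H\bfx_{t+1,j})$. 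Unpacking the Kronecker structure of $\bar{\bfz}_{t+1}$ and $\bar{H}_{t+1}$ shows that each $(k,j)$ block of the residual $\bar{\bfz}_{t+1} - \bar{H}_{t+1}\bfx_{t+1}$ equals $\bfz_{t+1,k} - H\bfx_{t+1,j}$, and the block-diagonal $\bar{V}_{t+1}$ weights it by $(V/\omega^{t+1}_{kj})^{-1} = \omega^{t+1}_{kj} V^{-1}$; the two quadratic forms therefore agree term by term.

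With this equivalence established, the rest is mechanical. Combining the expanded Gaussian likelihood with the Gaussian predicted prior $\calN(\bfmu_{t+1}^+, \Sigma_{t+1}^+)$ yields a Gaussian log-posterior in $\bfx_{t+1}$; within the Gaussian variational family, the ELBO-maximizing $q$ is exactly this conditional, whose mean and covariance are given by the standard Kalman update applied to the pair $(\bar{H}_{t+1}, \bar{V}_{t+1})$. A direct application of the matrix inversion lemma then produces the gain \eqref{eq:Kalman_gain} and the stated expressions for $\bfmu_{t+1}$ and $\Sigma_{t+1}$.

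The main obstacle I anticipate is the Kronecker-product bookkeeping in the second paragraph: verifying that $(I_{M_{t+1}} \otimes \mathbf{1}_N \otimes I_m) \bfz_{t+1}$ replicates each measurement $\bfz_{t+1,k}$ exactly $N$ times and aligns it with the correct variable $\bfx_{t+1,j}$ through $\bar{H}_{t+1}$, and confirming that the log-normalization terms arising from $\bar{V}_{t+1}$ depend only on the weights $\omega^{t+1}_{kj}$ and not on $\bfx_{t+1}$, so that they drop out of the \textbf{argmax}. Once this indexing is pinned down, the remainder of the derivation is a routine application of the standard Kalman filter equations.
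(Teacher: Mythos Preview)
Your proposal is correct, but it follows a markedly shorter route than the paper's own proof. The paper works throughout with the \emph{joint} state $\bar{\bfx}=[\bfx_t^\top,\bfx_{t+1}^\top]^\top$ and carries out the variational optimization explicitly: it differentiates the ELBO $f^{(i)}(q)$ with respect to $\bfmu$ and $\Sigma$, invokes Stein's lemma to express these gradients through $\partial J/\partial\bar{\bfx}$, stacks prior, motion, and measurement terms into lifted matrices $A$, $R$, $\bfy$, and then uses block-matrix inversion and a Schur-complement identity to marginalize out $\bfx_t$ and recover the information-form recursions before converting to the canonical gain form. Your argument, by contrast, first marginalizes to obtain the predicted prior on $\bfx_{t+1}$, then observes that the weighted M-step objective is exactly the log-likelihood of the expanded measurement model \eqref{eq:expanded_obs_model}, and finally appeals to the fact that in a linear-Gaussian model the ELBO-maximizing Gaussian is the exact posterior, so the standard Kalman update (plus Woodbury) delivers the stated formulas directly. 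What your approach buys is brevity and conceptual clarity---once the Kronecker bookkeeping confirms the equivalence of the two quadratic forms, everything else is off-the-shelf. What the paper's approach buys is a self-contained derivation from the variational principle that does not presuppose the Kalman update equations and that makes the joint-to-marginal step explicit; in your write-up you should at least note in one line that the optimal joint $q^*(\bar{\bfx})$ is the true Gaussian posterior (since it lies in the variational family), so that its $\bfx_{t+1}$-marginal coincides with the filter you compute on the predicted marginal prior.
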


\begin{proof}
    See Supplementary Material Section~\ref{sec:proof_PDA_KF}.
\end{proof}

\subsection{Comparison with PMHT and JPDAF}

We discuss the relationship of PKF to PMHT \cite{pmht} and JPDAF \cite{jpdaf} and compare them in simulated experiments.

\subsubsection{Comparison with PMHT}

The difference between our PKF and PMHT~\cite{pmht} is that PMHT assumes independence in both the E-step (association) and M-step (update), while the PKF does not. During association, PMHT computes the probability of each measurement and object individually without considering if the measurement is already assigned to another object. In contrast, PKF does the association in a joint manner by evaluating probabilities of the injective association functions, where a measurement can not be assigned to another object after being assigned to one. During the update step, PMHT updates each object individually, while PKF updates objects jointly, i.e., there are cross-correlation blocks is the object covariances.

\subsubsection{Comparison with JPDAF}
\label{sec:update_scheme}
The major difference is in the update step, we first show the update scheme difference, then show that the association weights in JPDAF can be computed using Proposition \ref{prop:weight_compute}.

\myparagraph{Update scheme} Both JPDAF~\cite{jpdaf} and PKF can be decomposed into two stages. The JPDAF first computes a posterior mean as the normal Kalman filter with each associated measurement, and then averages the posterior means with the association weights and computes the covariance correspondingly. Instead, our PKF first constructs an expanded measurement model and then performs a single update in the same form as the normal Kalman filter but with an expanded measurement vector. The JPDAF maximizes the ELBO with regard to each associated measurement and then averages those estimates, while our PKF maximizes the total weight-averaged ELBO, which does not necessarily maximize each measurement's individual ELBO but may lead to a higher total ELBO. This difference is underscored by the lower tracking errors in the simulations in Sec.~\ref{sec:jpda_sim_experiments}. 

Both algorithms can be performed in a coupling or decoupling way. In the coupling way, all the objects are stored in one filter and updated jointly. In the decoupling way, each object is stored and updated separately, i.e., setting the cross-correlation blocks of $\Sigma_t$ in Proposition~\ref{prop:KF} to $0$. 
In the experiments, we do decoupling updates just like JPDAF. 
Regarding the complexity, the Kalman gain in JPDAF is the same as the normal Kalman filter but JPDAF needs all the measurement residuals to update the mean and covariance. In PKF, the complexity of the Kalman gain computation increases due to the expanded measurement model. The original Kalman filter has complexity $\calO(m^{2.4} + n^2)$ \cite{montella2011kalman} in terms of the measurement dimension $m$ and state dimension $n$. Given the number of associated measurements $M_t$, the complexity of PKF is $\calO\left((M_t m)^{2.4} + M_t n^{2} \right)$. 

\myparagraph{Weight computation} Following JPDAF~\cite{jpdaf}, we make the assumption that 1) the number of established objects is known and 2) there can be missing or false measurements (clutter).
Using our notation, the probability of an association event $\delta_t$ in \cite[Eq.~47]{jpdaf} can be written as 
\begin{equation*} 
    p(\delta_t | \bfz_t) \! \propto  \! \prod_{k} \! \left( \frac{1}{\lambda} p(\bfz_{t,k} | \bfx_{t,\delta_t(k)}) \right)^{\tau_k} \! \prod_{j} \! (p^t_{D})^{\sigma_j} \! (1 - p^t_{D})^{1-\sigma_j}
\end{equation*}
where $\lambda$ is the spatial density of the false measurements Poisson pmf, $p^t_D$ is the detection probability, $\tau_k$ is an indicator of whether $\bfz_{t,k}$ is associated in the event, i.e. $\bfz_{t,k}$ is not treated as clutter, and $\sigma_{j}$ is an indicator of whether the object $\bfx_{t,j}$ is associated. Suppose that the number of clutter points in $\delta_t$ is $\Phi_t$, the number of measurements is $M_t$, and the number of objects is $N$. Since the number of established objects is assumed known, we have $M_t - \Phi_t \leq N$. With the number of clutter points fixed, $p(\delta_t | \bfz_t)$ can be simplified as
\begin{equation}\label{eq:simplifiled_even_prob}
\begin{aligned}
    p(\delta_t | \bfz_t) &\propto c_t^{\Phi_t} \prod_{k} p(\bfz_{t,k} | \bfx_{t,\delta_t(k)}), \;\; \delta_t \in \calD_t^{\Phi_t},  \\
    c_t^{\Phi_t} &= \lambda^{-\Phi_t}  (p^t_D)^{M_t  -   \Phi_t}(1-p^t_D)^{N_t -  M_t  +  \Phi_t}, 
\end{aligned}
\end{equation}
where $\calD_t^{\Phi_t}$ is the set of association events that have $\Phi_t$ false measurements. The probability of measurement $\bfz_{t,k}$ being assigned to object $\bfx_{t,j}$ is computed in \cite[Eq.~51]{jpdaf} as
\begin{align}
    w^t_{k,j} \triangleq \!\!\!\!\! \sum_{\delta_t \in \calD_{t}(k,j)} & p(\delta_t | \bfz_t)  = \sum_{\Phi_t}  \sum_{\delta_t \in \calD^{\Phi_t}_{t}(k,j)} \!\!\!\!\!\! p(\delta_t | \bfz_t) \label{eq:jpdaf_weight} \\
    &\propto \sum_{\Phi_t}  \sum_{\delta_t \in \calD^{\Phi_t}_{t}(k,j)} \!\!\!\!\! c_t^{\Phi_t}  \prod_{k}  p(\bfz_{t,k} | \bfx_{t,j}), \notag
\end{align}
where $\calD^{\Phi_t}_{t}(k,j)$ is the set of associations having $\Phi_t$ false measurements and assigning $\bfz_{t,k}$ to $\bfx_{t,j}$. Selecting a series of sets of false measurements of size $\Phi_t=\max\{0, M_t-N\}\ \cdots \ M_t$, for each fixed $\Phi_t$, the second sum on the last line of \eqref{eq:jpdaf_weight} can be computed via Proposition~\ref{prop:weight_compute}.

\subsubsection{Tracking error}
\label{sec:jpda_sim_experiments}

\begin{figure}[!t]
    \centering
    \includegraphics[width=0.8\linewidth]{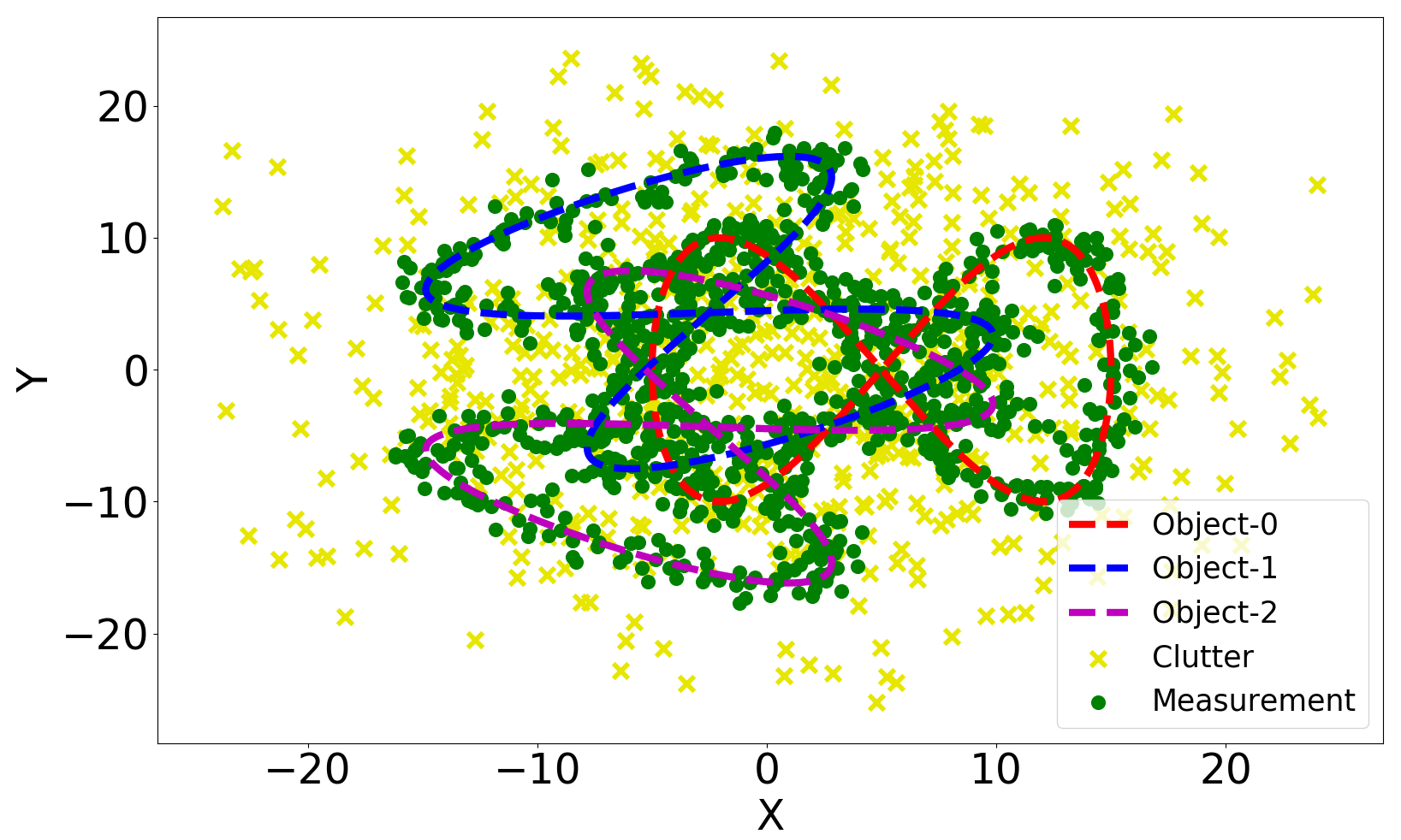}
    \caption{Object tracking simulation showing ground-truth object trajectories (dashed lines), object measurements (green dots), and clutter (false) measurements (yellow crosses).}
    \label{fig:sim_data_visu}
\end{figure}

We compare our PKF to PMHT~\cite{pmht} and JPDAF~\cite{jpdaf} with 2D simulated data, where each object is a point and moves along an 8-shape trajectory. The detection probability of each object is $p^t_D=0.9$, the measurement noise is a zero-mean Gaussian with covariance $\diag(0.75, 0.75)$, and clutter is sampled uniformly in the range of $[-10, 10]^2$ around each ground-truth point. We use $\lambda=0.125$ for the spatial density of the Poisson pmf. A visualization of 3 simulated objects can be found in Figure~\ref{fig:sim_data_visu}.

The errors of tracking 3 and 5 objects in simulation are shown in Table~\ref{tab:sim_results}. PMHT~\cite{pmht} works slightly better than binary ssociation but much worse than PKF and JPDAF~\cite{jpdaf} due to the non-exclusive data association. PKF achieves lower tracking errors than JPDAF implemented by \cite{stonesoup}.
Our intuition as to why PKF achieves lower errors is provided in the first paragraph of Sec.~\ref{sec:update_scheme}. 
We also compare PKF to JPDAF under different levels of measurement noise with 10 objects. In this setting, we increase the detection probability to $0.95$ to avoid frequent tracking failures. The clutter generation remains the same, and we test with measurement noises from $0.2$ to $0.75$. The results can be found in Figure~\ref{fig:sim_10_obj_ablation}. We can see that our filter obtains lower tracking errors in most cases. The number of failed tracks is almost the same as JPDAF~\cite{jpdaf}. 

\subsubsection{Update speed} We compare the update speeds of these methods with different numbers of objects. The results can be found in Table~\ref{tab:update_speed}. The experiments show that PKF can track at a comparable speed with JPDAF.


\begin{table}[!t]
\centering
\caption{Comparison of tracking errors ($l2$ norm in meters).}
\label{tab:sim_results}
\resizebox{\linewidth}{!}{
\begin{tabular}{llllllll}
\hline
$N_{obj}$     & Method      & Obj 1 & Obj 2 & Obj 3 & Obj 4 & Obj 5 & Avg \\ \hline
\multirow{4}{*}{3} & Binary    & 2.16     & 1.36     & 1.80     & -        & -        & 1.78 \\
            & PMHT~\cite{pmht} & 1.50     & 1.29     & 1.55     & -        & -        & 1.45  \\
            & JPDAF~\cite{jpdaf} & 0.70     & 0.60     & 0.65     & -        & -        & 0.65  \\
            & \textbf{PKF}  & \bf{0.70}     & \bf{0.57}     & \bf{0.58}     & -        & -        & \bf{0.62}  \\ \hline
\multirow{4}{*}{5} & Binary & 8.37     & 19.36    & 17.34     & 11.37     & 14.53     & 14.20  \\
        & PMHT~\cite{pmht} & 12.41     & 12.34     & 12.43     & 12.10        & 11.21        & 12.10  \\
        & JPDAF~\cite{jpdaf} & 0.68     & 0.73    & 0.62     & 0.63     & 0.66     & 0.66  \\
        & \textbf{PKF}  & \bf{0.60}     & \bf{0.68}     & \bf{0.58}     & \bf{0.61}     & \bf{0.61}     & \bf{0.62}  \\ \hline
\end{tabular}}
\end{table}

\begin{table}[!t]
\centering
\caption{Update time (ms) per frame given association weights.}
\label{tab:update_speed}
\begin{tabular}{llllll}
\hline
Number of objects      & 3 & 5 & 10 & 20 & Avg \\ \hline
Vanilla Kalman filter    & 0.08     & 0.12     & 0.20     & 0.36    & 0.19  \\
PMHT~\cite{pmht} & 0.11     & 0.16     & 0.28     & 0.55   & 0.28     \\
JPDAF~\cite{jpdaf} & 0.28     & 0.39     & 0.78     & 1.47   & 0.73     \\
PKF  & 0.31     & 0.46     & 0.93     & 1.76   & 0.87      \\ \hline
\end{tabular}
\end{table}

\begin{figure}[!t]
    \centering
    \includegraphics[width=1.0\linewidth]{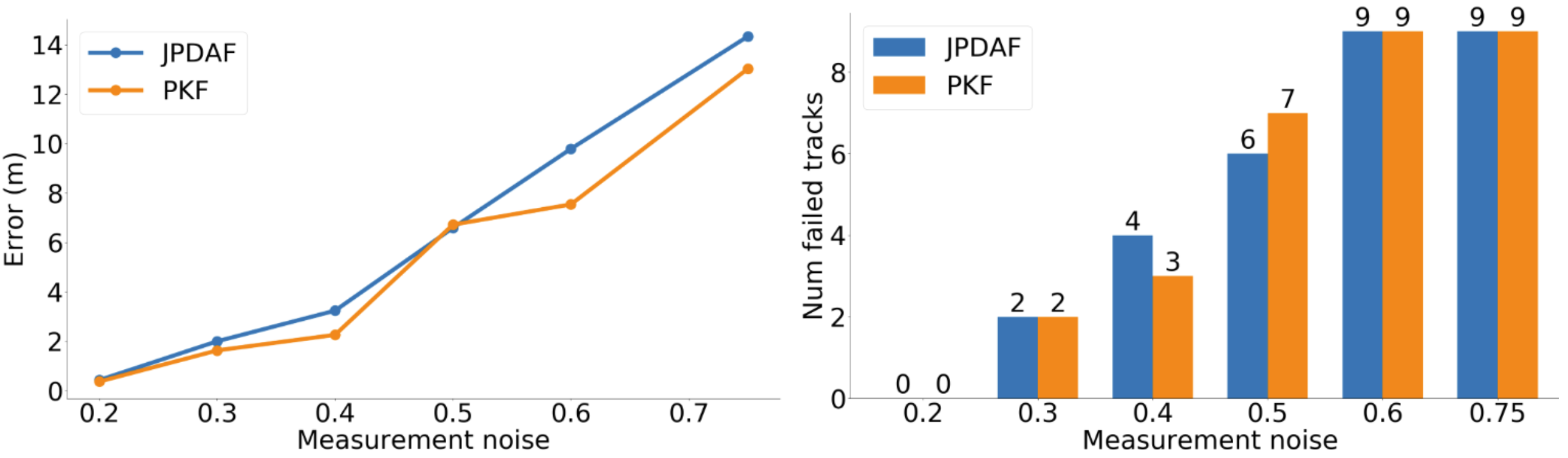}
    \caption{Ablation study with 10 objects in the simulation under different noise scales. Left: tracking errors in meters. Right: number of failed tracks, i.e., those with tracking errors larger than $5$.}
    \label{fig:sim_10_obj_ablation}
\end{figure}

\begin{table*}[!ht]
\centering
\caption{Results on MOT17~\cite{MOT17} testset with private detections. Methods in \textcolor{cyan}{blue} share detections.}
\label{tab:mot17_results}
\resizebox{0.85\linewidth}{!}{
\begin{tabular}{l|lllllllll}
\toprule
Tracker                  & HOTA $\uparrow$ & MOTA $\uparrow$ & IDF1 $\uparrow$ & FP($10^4$) $\downarrow$ & FN($10^4$) $\downarrow$ & IDs $\downarrow$  & Frag $\downarrow$ & AssA $\uparrow$ & AssR $\uparrow$ \\ \midrule
FairMOT~\cite{zhang2021fairmot}                & 59.3 & 73.7 & 72.3 & 2.75       & 11.7       & 3303 & 8073 & 58.0 & 63.6 \\
QDTrack~\cite{QDTrack}                & 53.9 & 68.7 & 66.3 & 2.66       & 14.7       & 3378 & 8091 & 52.7 & 57.2 \\
MOTR~\cite{MOTR}                & 57.2 & 71.9 & 68.4 & 2.11       & 13.6       & 2115 & 3897 & 55.8 & 59.2 \\
TransMOT~\cite{chu2023transmot}                & 61.7 & 76.7 & 75.1 & 3.62       & 9.32       & 2346 & 7719 & 59.9 & 66.5 \\
MeMOT~\cite{cai2022memot}                & 56.9 & 72.5 & 69.0 & 2.72       & 11.5       & 2724 & - & 55.2 & - \\
\rowcolor{LightCyan} ByteTrack~\cite{bytetrack}                & 63.1 & 80.3 & 77.3 & 2.55       & 8.37       & 2196 & 2277 & 62.0 & 68.2 \\
\rowcolor{LightCyan} OC-SORT~\cite{ocsort} & 63.2 & 78.0 & 77.5 & 1.51       & 10.8       & 1950  & 2040 & 63.2 & 67.5 \\
\rowcolor{LightCyan} \textbf{PKF}                  & \textbf{63.3} & \textbf{78.3} & \textbf{77.7} & \textbf{1.56}       & \textbf{10.5}       & \textbf{2130}  & \textbf{2754}    & \textbf{63.4} & \textbf{67.4} \\
\bottomrule
\end{tabular}
}
\end{table*}

\begin{table*}[t]
\centering
\caption{Results on MOT20~\cite{MOT20} testset with private detections. Methods in \textcolor{cyan}{blue} share detections.}
\label{tab:mot20_results}
\resizebox{0.85\linewidth}{!}{
\begin{tabular}{l|lllllllll}
\toprule
Tracker                  & HOTA $\uparrow$ & MOTA $\uparrow$ & IDF1 $\uparrow$ & FP($10^4$) $\downarrow$ & FN($10^4$) $\downarrow$ & IDs $\downarrow$  & Frag $\downarrow$ & AssA $\uparrow$ & AssR $\uparrow$ \\ \midrule
FairMOT~\cite{zhang2021fairmot}                & 54.6 & 61.8 & 67.3 & 10.3       & 8.89       & 5243 & 7874 & 54.7 & 60.7 \\
TransMOT~\cite{chu2023transmot}                & 61.9 & 77.5 & 75.2 & 3.42       & 8.08       & 1615 & 2421 & 60.1 & 66.3 \\
MeMOT~\cite{cai2022memot}                & 54.1 & 63.7 & 66.1 & 4.79       & 13.8       & 1938 & - & 55.0 & - \\
\rowcolor{LightCyan} ByteTrack~\cite{bytetrack}                & 61.3 & 77.8 & 75.2 & 2.62       & 8.76       & 1223 & 1460 & 59.6 & 66.2 \\
\rowcolor{LightCyan} OC-SORT~\cite{ocsort} & 62.1 & 75.5 & 75.9 & 1.80       & 10.8       & 913  & 1198 & 62.0 & 67.5 \\
\rowcolor{LightCyan} \textbf{PKF}                  & \textbf{62.3} & \textbf{75.4} & \textbf{76.3} & \textbf{1.73}       & \textbf{10.9}       & \textbf{980}  & \textbf{1584}    & \textbf{62.7} & \textbf{67.6} \\ 
\bottomrule
\end{tabular}
}
\end{table*}

\section{Application to multi-object tracking}

In this section, we apply our PKF to the MOT task. 

\subsection{Algorithm design}
\label{sec:algorithm_design}

To make the filter work efficiently in practice, we introduce an ambiguity check to avoid outliers and unnecessary computation.
Also, we force the state covariance matrix to be block diagonal so that each track is estimated by a separate filter, although the states are still considered jointly when evaluating the association probabilities. Following SORT~\cite{sort}, we have
$\bfx = [u, v, s, r, \dot{u}, \dot{v}, \dot{s}]^\top$ and $\bfz = [u, v, s, r]^\top$
%
%
where $u$ and $v$ are the horizontal and vertical pixel location of the center of the object box, $s$ and $r$ represent the scale (area) and the aspect ratio of the object’s box, and $\dot{()}$ terms are velocities. Details about the motion and measurement model matrices can be found in Section \ref{sec:motion_measurement_models}.

\myparagraph{Ambiguity check}
Gating is very important in Kalman filtering~\cite{jpdaf}. We propose a simple yet efficient way to find ambiguous measurements in the meantime of removing potential outliers (gating).
Given measurements $\bfz_{t,k}$, $k=1,\ldots, M_t$ and states $\bfx_{t,j}$, $j = 1,\ldots, N$, we first compute a score matrix $S_t \in \bbR^{M_t \times N}$ with entries $[S_t]_{kj}$ equal to the intersection over union (IoU) for each measurement-object bounding-box pair. 
Then, we perform an ambiguity check. For each row $k$ of $S_t$ (corresponding to measurement $\bfz_{t,k}$), we sort the scores from high to low. Supposing the rank is $\bfx_{t,j_1} \ \bfx_{t,j_2} \cdots \bfx_{t,j_N}$, if object $\bfx_{t,j_2}$ has a score over a threshold $\tau_{ambig}$, e.g., $90\%$ the score of object $\bfx_{t,j_1}$, then $\bfx_{t,j_1}$ and $\bfx_{t,j_2}$ are ambiguous objects, and the measurement $\bfz_{t,k}$ is an ambiguous measurement. Then, we compare $\bfx_{t,j_2}$ and $\bfx_{t,j_3}$ and so on. We repeat this for all objects and measurements to obtain an ambiguous set. Measurements and objects with their matches marked as ambiguous will also be added to the ambiguous set. Proposition~\ref{prop:weight_compute} is only applied to the ambiguous set of measurements and objects. 
The rest of the measurements and objects are associated with binary association~\cite{hungarian,jonker-volgenant}.
An illustration of how our proposed probabilistic data association Kalman filter benefits the ambiguous case is shown in Figure~\ref{fig:assoc_illustration}, where binary association causes a wrong ID switch while our proposed association can obtain a correct ID.

We create a filter for each new object (equivalent to enforcing a block-diagonal covariance matrix). This avoids a large state covariance and observation Jacobian $\bar{H}_{t+1}$ in \eqref{eq:Kalman_gain}, which speeds up the tracking especially when the number of objects is large. We still compute the association weight matrix $W$ jointly, each column vector $\bfw_j$ corresponds to a tracked object with measurement association weights. Measurements with a weight larger than a threshold $\tau_{weight}$ are used to update the tracker by Proposition~\ref{prop:KF}, which helps exclude potential outliers and achieve stable computation.

\begin{figure}[t]
    \centering
    \includegraphics[width=\linewidth,trim={0 10pt 0 10pt},clip]{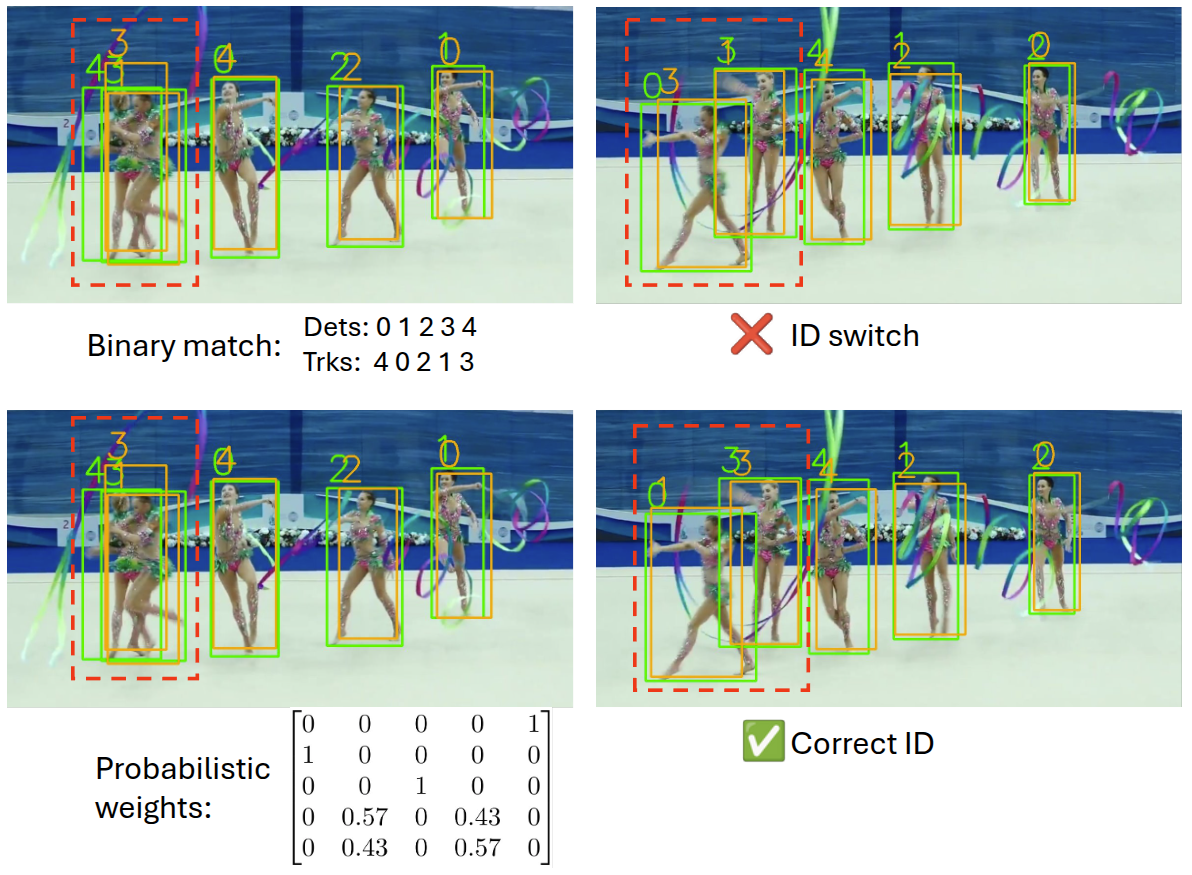}
    \caption{Illustration of our data association. The detections and tracks are shown in \textcolor{green}{green} and \textcolor{orange}{orange}. The first row shows the result of binary association via 
    \cite{jonker-volgenant}. The second row is our proposed data association. 
    In the first image, the detections highly overlap, and our ambiguity check recognizes these as ambiguous (\textcolor{red}{dashed red boxes}). The binary association causes an ID switch, but our method can track the objects correctly, as shown in the third image.
    }
    \label{fig:assoc_illustration}
\end{figure}

\subsection{Evaluation}
\label{sec:evaluation}


\myparagraph{Datasets}
We evaluate PKF on multiple datasets, including MOT17~\cite{MOT17}, MOT20~\cite{MOT20}, and DanceTrack~\cite{dancetrack}. MOT17 and MOT20 focus on crowded pedestrian tracking in public places. DanceTrack is a dancing-scene dataset, where the objects have similar appearances, move fast, highly non-linearly, and cross over each other frequently, thus imposing a higher requirement on data association. 

\myparagraph{Metrics}
We use higher order tracking accuracy (HOTA) \cite{hota} as our main metric because it balances between detection accuracy and association accuracy. We also report the AssA~\cite{hota} and IDF1~\cite{IDF1} metrics, which emphasize data association accuracy and the MOTA~\cite{clearmot} metric, which emphasizes the detection accuracy.

\myparagraph{Implementation}
In performing the data association ambiguity check, we compute the IoU between detections and tracked bounding boxes. To construct the matrix $Q$ in Proposition~\ref{prop:weight_compute}, we treat $UoI := 1/IoU$ as a distance (not an actual distance function mathematically) and compute the conditional probability as $p(\bfz_{t,k} | \bfx_{t,j}) \propto \exp(-\alpha UoI)$, where $\alpha$ is a scaling factor. We found $\alpha=2$ to work well in practice. On MOT17~\cite{MOT17} and DanceTrack~\cite{dancetrack}, the ambiguity check threshold was set to $\tau_{ambig}=0.9$. On MOT20, we set $\tau_{ambig}=0.95$ since the pedestrians are more crowded. The weight threshold was set at $\tau_{weight}=0.25$. We used the publicly available YOLOX weights by ByteTrack~\cite{bytetrack}. Following the common practice of SORT~\cite{sort}, we set the detection confidence threshold at $0.4$ for MOT20 and $0.6$ for other datasets. A new track is created if a detected box has IoU lower than $0.3$ with all the tracks. All experiments were conducted on a laptop with i9-11980HK@2.60 CPU, 16 GB RAM, and RTX 3080 GPU.

\subsection{Benchmark results}

In this section, we first show that PKF can get comparable results to other methods~\cite{bytetrack,ocsort} by only associating bounding boxes, i.e., without using any other techniques or features. We then show that PKF is compatible with more advanced object features can serve as a backbone, same as SORT \cite{sort}.

\myparagraph{MOT17 and MOT20} The quantitative results on MOT17~\cite{MOT17} and MOT20~\cite{MOT20} are shown in Table~\ref{tab:mot17_results} and \ref{tab:mot20_results}. To get a fair comparison, we use the same detections as ByteTrack~\cite{bytetrack} and OC-SORT~\cite{ocsort} and inherit the linear interpolation. 
While the direct comparison to our PKF is the vanilla Kalman filter, we can achieve comparable results to ByteTrack~\cite{bytetrack} and OC-SORT~\cite{ocsort}, which do multiple rounds of associations~\cite{bytetrack} , make use of velocities~\cite{ocsort}, and re-updates~\cite{ocsort}.
We can see that PKF is able to achieve better data association, as indicated by IDF1 and AssA, and achieves slightly higher HOTA results. 


\begin{table}[t]
\centering
\caption{DanceTrack~\cite{dancetrack} results. \textcolor{cyan}{Blue} means shared detections.}
\label{tab:dancetrack_results}
\resizebox{\linewidth}{!}{
\begin{tabular}{l|lllll}
\toprule
Tracker                  & HOTA $\uparrow$ & DetA $\uparrow$ & AssA $\uparrow$ & MOTA $\uparrow$ & IDF1 $\uparrow$ \\ \midrule
FairMOT~\cite{zhang2021fairmot}      & 39.7 & 66.7 & 23.8 & 82.2 & 40.8 \\
QDTrack~\cite{QDTrack}               & 45.7 & 72.1 & 29.2 & 83.0 & 44.8 \\
TraDes~\cite{TraDes}                 & 43.3 & 74.5 & 25.4 & 86.2 & 41.2 \\
MOTR~\cite{MOTR}                     & 54.2 & 73.5 & 40.2 & 79.7 & 51.5 \\
\rowcolor{LightCyan} SORT~\cite{sort}         & 50.6 & 80.2 & 32.0 & 89.2 & 48.9 \\
\rowcolor{LightCyan} DeepSORT~\cite{deepsort}                 & 45.6 & 71.0 & 29.7 & 87.8 & 47.9 \\
\rowcolor{LightCyan} ByteTrack~\cite{bytetrack}                & 47.3 & 71.6 & 31.4 & 89.5 & 52.5 \\
\rowcolor{LightCyan} OC-SORT~\cite{ocsort}                  & 54.5 & 80.4 & 37.1 & 89.4 & 54.0 \\
\rowcolor{LightCyan} \textbf{PKF}                    & \textbf{53.7} & \textbf{79.7} & \textbf{36.2} & \textbf{89.1} & \textbf{53.5} \\
\bottomrule
\end{tabular}
}
\end{table}

\begin{table}[t]
\centering
\caption{Application to Hybrid-SORT-ReID~\cite{yang2024hybrid}.}
\label{tab:hybridpkf_results}
\resizebox{\linewidth}{!}{
\begin{tabular}{ll|llll}
\toprule
Dataset & Tracker                  & HOTA $\uparrow$ & AssA $\uparrow$ & MOTA $\uparrow$ & IDF1 $\uparrow$ \\ \midrule
\multirow{2}{*}{MOT20} & Hybrid-SORT-ReID~\cite{yang2024hybrid}                    & 63.9 & 64.7 & 76.7 & 78.4 \\
& Hybrid-PKF-ReID                   & \textbf{64.3} & \textbf{65.1} & 76.7 & \textbf{79.0} \\ \midrule
\multirow{2}{*}{DanceTrack} & Hybrid-SORT-ReID~\cite{yang2024hybrid}                    & 65.5 & 52.2 & \textbf{91.8} & 67.2 \\
& Hybrid-PKF-ReID                   & \textbf{66.4} & \textbf{53.9} & 91.6 & \textbf{68.7} \\
\bottomrule
\end{tabular}
}
\end{table}


\myparagraph{DanceTrack} The results on DanceTrack~\cite{dancetrack} are shown in Table~\ref{tab:dancetrack_results}. We can see that PKF achieves better results than SORT~\cite{sort} and ByteTrack~\cite{bytetrack}. 
In this dataset, there are a lot of occlusions and the dancers can move dramatically. The increase of AssA and IDF1 over SORT~\cite{sort} shows again the advantage of PKF in terms of association. 

\myparagraph{Application to Hybrid-SORT-ReID} Besides data association techniques from \cite{bytetrack,ocsort}, Hybrid-SORT-ReID~\cite{yang2024hybrid} further uses detection scores, a height-enhanced IoU, and a ReID module with deep neural features.
We show that our PKF is compatible with these features, just as SORT~\cite{sort} is, by replacing the SORT part (Kalman filter) with PKF. We name the new method Hybrid-PKF-ReID. With the score matrix computed by Hybrid-SORT-ReID using all these features, we compute the measurement probability as $p(\bfz_{t,k} | \bfx_{t,j}) \propto \exp(-\alpha /s_{k,j})$, where $s_{k,j}$ is the association score of $\bfz_{t,k}$ and $\bfx_{t,j}$. We test Hybrid-PKF-ReID on MOT20 and Dancetrack, which have more ambiguous scenarios. Table~\ref{tab:hybridpkf_results} shows that adding probabilistic association improves Hybrid-SORT-ReID's results. 
We show some sequences that PKF improves in Fig.~\ref{fig:dataset_examples}, which are highly occluded.
This suggests that PKF is effective in highly ambiguous cases.

\begin{figure}[t]
    \centering
    \includegraphics[width=\linewidth]{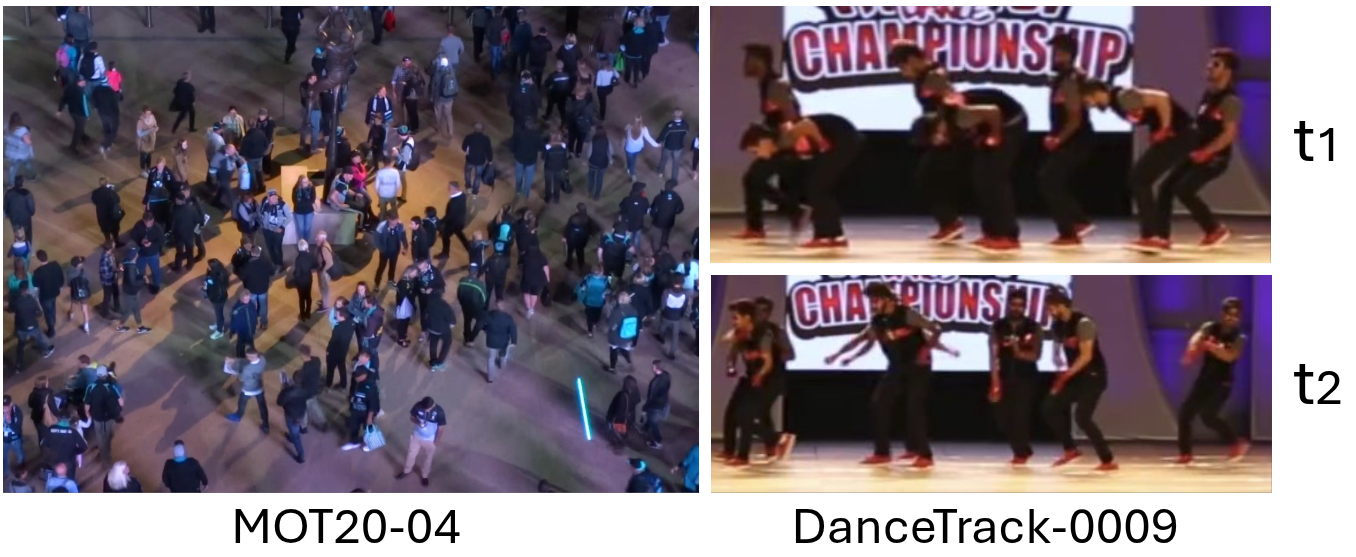}
    \caption{Visualizations of different datasets. 
    }
    \label{fig:dataset_examples}
\end{figure}


\begin{table}[t]
\centering
\caption{Ablation study of ambiguity check (A.C.). 
}
\label{tab:ablation}
\resizebox{\linewidth}{!}{
\begin{tabular}{c|ccc|ccc}
\toprule
                         & \multicolumn{3}{c|}{MOT17}       & \multicolumn{3}{c}{DanceTrack}   \\ 
\midrule
Metrics                  & HOTA $\uparrow$ & IDF1 $\uparrow$ & $t$ (ms) & HOTA $\uparrow$ & IDF1 $\uparrow$ & $t$ (ms) \\ 
\midrule
Binary                     & 64.9 & 76.9 & \textbf{5.9}               & 47.8 & 48.3 & \textbf{3.4}               \\
PKF w/o A.C. & 44.9    & 52.8    & 4970.6               & 44.2    & 45.2    & 121.2               \\
PKF & \textbf{66.8}    & \textbf{78.5}    & 6.2               & \textbf{53.5}    & \textbf{53.3}    & 3.7               \\ 
\bottomrule
\end{tabular}
}
\end{table}

\revision{
\begin{table}[!ht]
\centering
\caption{Effect of gating and ambiguity check in the JPDAF \cite{jpdaf} on the $l2$ tracking error (meters) in simulation.}
\label{tab:sim_ablation}
\resizebox{\linewidth}{!}{
\begin{tabular}{lllllllll}
\toprule
Method      & Obj 1 & Obj 2 & Obj 3 & Obj 4 & Obj 5 & Avg   \\ 
\midrule
Binary & 8.37     & 19.36    & 17.34     & 11.37     & 14.53     & 14.20  \\
JPDAF~\cite{jpdaf} w/o gating & 19.32     & 18.31     & 10.60     & 16.15        & 11.63        & 15.21  \\
JPDAF~\cite{jpdaf} w/ gating & 0.68     & \textbf{0.73}    & \textbf{0.62}     & \textbf{0.63}     & 0.66     & \textbf{0.66}  \\
JPDAF~\cite{jpdaf} w/ A.C.  & \textbf{0.62}     & 1.47     & 0.62     & 0.63     & \textbf{0.58}     & 0.79  \\ 
\bottomrule
\end{tabular}}
\end{table}

\begin{table}[!ht]
\centering
\caption{Computation time (ms) of matrix permanent and overall association weights with different matrix sizes.}
\label{tab:weight_computation_time}
\resizebox{\linewidth}{!}{
\begin{tabular}{lllllll}
\toprule
Matrix size     & $5\times5$      & $10\times10$ & $15\times15$ & $20\times20$ & $30\times30$  \\ \midrule
Nijenhuis et al.~\cite{nijenhuis2014combinatorial} & $0.003$    & $0.01$     & $2.0$     & $26.5$     & $31510.9$   \\
Huber et al.~\cite{huber2008fast}   & $6.5$ & $20.1$     & $21.7$     & $35.4$     & $67.8$  \\
Best   & $0.003$ & $0.01$  & $2.0$     & $26.5$     & $67.8$  \\ \midrule
Weight compute   & $0.03$ & $0.2$  & $14.5$     & $1073.6$     & $1766.6$  \\  
              \bottomrule
\end{tabular}}
\end{table}

}

\subsection{Ablation study}

We compare the effect of ambiguity checks and gating, and analyze the computation time of the matrix permanent.

\myparagraph{Ambiguity check}
We compare the performance of PKF with and without ambiguity check in Table~\ref{tab:ablation}. 
We observe that associating all measurements and tracks directly 
degrades the performance due to updating with low-weight measurements, which are mostly outliers. 
Introducing the ambiguity check in PKF to discard low-probability measurements before computing the matrix permanents, leads to better results than binary association with almost the same tracking speed. Table~\ref{tab:sim_ablation} shows that our ambiguity check has a similar effect to Chi-squared gating in the JPDAF \cite{jpdaf} with 5 objects in simulation. We do not do gating in real-world data since $1/IoU$ is ill-conditioned for Chi-squared test while the ambiguity check is more flexible with different features.

\myparagraph{Matrix permanent computation time}
We analyze the computation time of the matrix permanent used to obtained the association weights. We take the faster matrix permanent method between \cite{nijenhuis2014combinatorial} and \cite{huber2008fast} (switch at size $20\times20$). Table \ref{tab:weight_computation_time} shows that the speed is very fast for matrices smaller than $15\times15$ and computations on matrices up to $30\times30$ run at over $0.5$ fps.

\myparagraph{Discussion}
We observe that achieving effective use of probabilistic data association requires a balance between removing outliers (gating) and keeping in ambiguous measurements. If there is no gating, the outliers will bring bias to the measurements. On the other hand, if only the most likely measurements are considered, wrong hard decisions could harm the estimation. The probability and gating model can be improved to unlock further potential of PKF.
%


\section{Conclusion}
\label{sec:conclusion}

We derived a new formulation of the Kalman filter with probabilistic data association by formulating a variational inference problem and introducing data association as a latent variable in the EM algorithm. We showed that, in the E-step, the association probabilities can be computed as matrix permanents, while the M-step led to the usual Kalman filter prediction and update steps but with an extended measurement model. 
Our experiments demonstrated that our filter can outperform the JPDAF and can achieve good results on MOT benchmarks. Our algorithm is not restricted to the MOT application and can serve as a general method for estimation problems with measurement ambiguity.  


\bibliographystyle{ieeetr}
\bibliography{root}

\begin{thebibliography}{10}

\bibitem{sort}
A.~Bewley, Z.~Ge, L.~Ott, F.~Ramos, and B.~Upcroft, ``Simple online and
  realtime tracking,'' in {\em ICIP}, pp.~3464--3468, 2016.

\bibitem{bytetrack}
Y.~Zhang, P.~Sun, Y.~Jiang, D.~Yu, F.~Weng, Z.~Yuan, P.~Luo, W.~Liu, and
  X.~Wang, ``Bytetrack: Multi-object tracking by associating every detection
  box,'' in {\em ECCV}, pp.~1--21, Springer, 2022.

\bibitem{ocsort}
J.~Cao, J.~Pang, X.~Weng, R.~Khirodkar, and K.~Kitani, ``Observation-centric
  sort: Rethinking sort for robust multi-object tracking,'' in {\em CVPR},
  pp.~9686--9696, 2023.

\bibitem{yang2024hybrid}
M.~Yang, G.~Han, B.~Yan, W.~Zhang, J.~Qi, H.~Lu, and D.~Wang, ``Hybrid-sort:
  Weak cues matter for online multi-object tracking,'' in {\em Proceedings of
  the AAAI conference on artificial intelligence}, vol.~38, pp.~6504--6512,
  2024.

\bibitem{barfoot2020exactly}
T.~D. Barfoot, J.~R. Forbes, and D.~J. Yoon, ``Exactly sparse gaussian
  variational inference with application to derivative-free batch nonlinear
  state estimation,'' {\em IEEE IJRR}, vol.~39, no.~13, pp.~1473--1502, 2020.

\bibitem{Cao_MultiRobotSLAM_RAL24}
H.~Cao, S.~Shreedharan, and N.~Atanasov, ``{Multi-Robot Object SLAM Using
  Distributed Variational Inference},'' {\em IEEE Robotics and Automation
  Letters}, vol.~9, no.~10, pp.~8722--8729, 2024.

\bibitem{nijenhuis2014combinatorial}
A.~Nijenhuis and H.~S. Wilf, {\em Combinatorial algorithms: for computers and
  calculators}.
\newblock Elsevier, 2014.

\bibitem{huber2008fast}
M.~Huber and J.~Law, ``Fast approximation of the permanent for very dense
  problems,'' in {\em Proceedings of the nineteenth annual ACM-SIAM symposium
  on Discrete algorithms}, pp.~681--689, 2008.

\bibitem{pmht}
R.~L. Streit and T.~E. Luginbuhl, ``Maximum likelihood method for probabilistic
  multihypothesis tracking,'' in {\em Signal and data processing of small
  targets 1994}, vol.~2235, pp.~394--405, SPIE, 1994.

\bibitem{jpdaf}
Y.~Bar-Shalom, F.~Daum, and J.~Huang, ``The probabilistic data association
  filter,'' {\em IEEE Control Systems Magazine}, vol.~29, no.~6, pp.~82--100,
  2009.

\bibitem{dancetrack}
P.~Sun, J.~Cao, Y.~Jiang, Z.~Yuan, S.~Bai, K.~Kitani, and P.~Luo, ``Dancetrack:
  Multi-object tracking in uniform appearance and diverse motion,'' in {\em
  CVPR}, pp.~20993--21002, 2022.

\bibitem{IDF1}
E.~Ristani, F.~Solera, R.~Zou, R.~Cucchiara, and C.~Tomasi, ``Performance
  measures and a data set for multi-target, multi-camera tracking,'' in {\em
  ECCV}, pp.~17--35, Springer, 2016.

\bibitem{hota}
J.~Luiten, A.~Osep, P.~Dendorfer, P.~Torr, A.~Geiger, L.~Leal-Taix{\'e}, and
  B.~Leibe, ``Hota: A higher order metric for evaluating multi-object
  tracking,'' {\em IJCV}, vol.~129, pp.~548--578, 2021.

\bibitem{MOT17}
A.~Milan, L.~Leal-Taix{\'e}, I.~Reid, S.~Roth, and K.~Schindler, ``{MOT16}: A
  benchmark for multi-object tracking,'' {\em arXiv preprint arXiv:1603.00831},
  2016.

\bibitem{MOT20}
P.~Dendorfer, H.~Rezatofighi, A.~Milan, J.~Shi, D.~Cremers, I.~Reid, S.~Roth,
  K.~Schindler, and L.~Leal-Taix{\'e}, ``{MOT20}: A benchmark for multi object
  tracking in crowded scenes,'' {\em arXiv preprint arXiv:2003.09003}, 2020.

\bibitem{kaess2008isam}
M.~Kaess, A.~Ranganathan, and F.~Dellaert, ``isam: Incremental smoothing and
  mapping,'' {\em IEEE TRO}, vol.~24, no.~6, pp.~1365--1378, 2008.

\bibitem{jcbb}
J.~Neira and J.~D. Tard{\'o}s, ``Data association in stochastic mapping using
  the joint compatibility test,'' {\em IEEE Transactions on robotics and
  automation}, vol.~17, no.~6, pp.~890--897, 2001.

\bibitem{pdaf}
Y.~Bar-Shalom and E.~Tse, ``Tracking in a cluttered environment with
  probabilistic data association,'' {\em Automatica}, vol.~11, no.~5,
  pp.~451--460, 1975.

\bibitem{IPDA}
D.~Musicki, R.~Evans, and S.~Stankovic, ``Integrated probabilistic data
  association,'' {\em IEEE TAC}, vol.~39, no.~6, pp.~1237--1241, 1994.

\bibitem{JIPDA}
D.~Musicki and R.~Evans, ``Joint integrated probabilistic data association:
  Jipda,'' {\em IEEE transactions on Aerospace and Electronic Systems},
  vol.~40, no.~3, pp.~1093--1099, 2004.

\bibitem{blom2000probabilistic}
H.~A. Blom and E.~A. Bloem, ``Probabilistic data association avoiding track
  coalescence,'' {\em IEEE TAC}, vol.~45, no.~2, pp.~247--259, 2000.

\bibitem{yang2018linear}
S.~Yang, K.~Thormann, and M.~Baum, ``Linear-time joint probabilistic data
  association for multiple extended object tracking,'' in {\em IEEE Sensor
  Array and Multichannel Signal Processing Workshop (SAM)}, pp.~6--10, 2018.

\bibitem{rezatofighi2015joint}
S.~H. Rezatofighi, A.~Milan, Z.~Zhang, Q.~Shi, A.~Dick, and I.~Reid, ``Joint
  probabilistic data association revisited,'' in {\em ICCV}, pp.~3047--3055,
  2015.

\bibitem{meyer2018message}
F.~Meyer, T.~Kropfreiter, J.~L. Williams, R.~Lau, F.~Hlawatsch, P.~Braca, and
  M.~Z. Win, ``Message passing algorithms for scalable multitarget tracking,''
  {\em Proceedings of the IEEE}, vol.~106, no.~2, pp.~221--259, 2018.

\bibitem{survey}
L.~Rakai, H.~Song, S.~Sun, W.~Zhang, and Y.~Yang, ``Data association in
  multiple object tracking: A survey of recent techniques,'' {\em Expert
  systems with applications}, vol.~192, p.~116300, 2022.

\bibitem{bio_tracking}
W.~J. Godinez and K.~Rohr, ``Tracking multiple particles in fluorescence
  time-lapse microscopy images via probabilistic data association,'' {\em IEEE
  transactions on medical imaging}, vol.~34, no.~2, pp.~415--432, 2014.

\bibitem{bowman2017probabilistic}
S.~L. Bowman, N.~Atanasov, K.~Daniilidis, and G.~J. Pappas, ``Probabilistic
  data association for semantic slam,'' in {\em ICRA}, pp.~1722--1729, 2017.

\bibitem{doherty2019multimodal}
K.~Doherty, D.~Fourie, and J.~Leonard, ``Multimodal semantic slam with
  probabilistic data association,'' in {\em ICRA}, pp.~2419--2425, 2019.

\bibitem{faster_rcnn}
S.~Ren, K.~He, R.~Girshick, and J.~Sun, ``Faster r-cnn: Towards real-time
  object detection with region proposal networks,'' {\em NeurIPS}, vol.~28,
  2015.

\bibitem{yolo}
J.~Redmon, S.~Divvala, R.~Girshick, and A.~Farhadi, ``You only look once:
  Unified, real-time object detection,'' in {\em NeurIPS}, pp.~779--788, 2016.

\bibitem{hungarian}
H.~W. Kuhn, ``The hungarian method for the assignment problem,'' {\em Naval
  research logistics quarterly}, vol.~2, no.~1-2, pp.~83--97, 1955.

\bibitem{deepsort}
N.~Wojke, A.~Bewley, and D.~Paulus, ``Simple online and realtime tracking with
  a deep association metric,'' in {\em ICIP}, pp.~3645--3649, 2017.

\bibitem{feichtenhofer2017detect}
C.~Feichtenhofer, A.~Pinz, and A.~Zisserman, ``Detect to track and track to
  detect,'' in {\em ICCV}, pp.~3038--3046, 2017.

\bibitem{bergmann2019tracking}
P.~Bergmann, T.~Meinhardt, and L.~Leal-Taixe, ``Tracking without bells and
  whistles,'' in {\em CVPR}, pp.~941--951, 2019.

\bibitem{centertrack}
X.~Zhou, V.~Koltun, and P.~Kr{\"a}henb{\"u}hl, ``Tracking objects as points,''
  in {\em ECCV}, pp.~474--490, Springer, 2020.

\bibitem{centernet}
X.~Zhou, D.~Wang, and P.~Kr{\"a}henb{\"u}hl, ``Objects as points,'' {\em arXiv
  preprint arXiv:1904.07850}, 2019.

\bibitem{braso2020learning}
G.~Bras{\'o} and L.~Leal-Taix{\'e}, ``Learning a neural solver for multiple
  object tracking,'' in {\em CVPR}, pp.~6247--6257, 2020.

\bibitem{trackformer}
T.~Meinhardt, A.~Kirillov, L.~Leal-Taixe, and C.~Feichtenhofer, ``Trackformer:
  Multi-object tracking with transformers,'' in {\em CVPR}, pp.~8844--8854,
  2022.

\bibitem{transformer}
A.~Vaswani, N.~Shazeer, N.~Parmar, J.~Uszkoreit, L.~Jones, A.~N. Gomez,
  {\L}.~Kaiser, and I.~Polosukhin, ``Attention is all you need,'' {\em
  NeurIPS}, vol.~30, 2017.

\bibitem{bishop2006pattern}
C.~M. Bishop, {\em Pattern Recognition and Machine Learning}.
\newblock Springer, 2006.

\bibitem{montella2011kalman}
C.~Montella, ``The kalman filter and related algorithms: A literature review,''
  {\em Res. Gate}, pp.~1--17, 2011.

\bibitem{stonesoup}
S.~Hiscocks, J.~Barr, N.~Perree, J.~Wright, H.~Pritchett, O.~Rosoman,
  M.~Harris, R.~Gorman, S.~Pike, P.~Carniglia, L.~Vladimirov, and B.~Oakes,
  ``Stone {S}oup: No longer just an appetiser,'' in {\em International
  Conference on Information Fusion (FUSION)}, 2023.

\bibitem{zhang2021fairmot}
Y.~Zhang, C.~Wang, X.~Wang, W.~Zeng, and W.~Liu, ``Fairmot: On the fairness of
  detection and re-identification in multiple object tracking,'' {\em IJCV},
  vol.~129, pp.~3069--3087, 2021.

\bibitem{QDTrack}
J.~Pang, L.~Qiu, X.~Li, H.~Chen, Q.~Li, T.~Darrell, and F.~Yu, ``Quasi-dense
  similarity learning for multiple object tracking,'' in {\em CVPR},
  pp.~164--173, 2021.

\bibitem{MOTR}
F.~Zeng, B.~Dong, Y.~Zhang, T.~Wang, X.~Zhang, and Y.~Wei, ``{MOTR}: End-to-end
  multiple-object tracking with transformer,'' in {\em ECCV}, pp.~659--675,
  Springer, 2022.

\bibitem{chu2023transmot}
P.~Chu, J.~Wang, Q.~You, H.~Ling, and Z.~Liu, ``Transmot: Spatial-temporal
  graph transformer for multiple object tracking,'' in {\em WACV},
  pp.~4870--4880, 2023.

\bibitem{cai2022memot}
J.~Cai, M.~Xu, W.~Li, Y.~Xiong, W.~Xia, Z.~Tu, and S.~Soatto, ``{MeMOT}:
  Multi-object tracking with memory,'' in {\em CVPR}, pp.~8090--8100, 2022.

\bibitem{jonker-volgenant}
R.~Jonker and T.~Volgenant, ``A shortest augmenting path algorithm for dense
  and sparse linear assignment problems,'' in {\em DGOR/NSOR: Papers of the
  16th Annual Meeting of DGOR in Cooperation with NSOR/Vortr{\"a}ge der 16.
  Jahrestagung der DGOR zusammen mit der NSOR}, pp.~622--622, 1988.

\bibitem{clearmot}
K.~Bernardin and R.~Stiefelhagen, ``Evaluating multiple object tracking
  performance: the clear mot metrics,'' {\em EURASIP Journal on Image and Video
  Processing}, vol.~2008, pp.~1--10, 2008.

\bibitem{TraDes}
J.~Wu, J.~Cao, L.~Song, Y.~Wang, M.~Yang, and J.~Yuan, ``Track to detect and
  segment: An online multi-object tracker,'' in {\em CVPR}, pp.~12352--12361,
  2021.

\bibitem{stein1981estimation}
C.~M. Stein, ``Estimation of the mean of a multivariate normal distribution,''
  {\em The Annals of Statistics}, pp.~1135--1151, 1981.

\end{thebibliography}

\clearpage
\setcounter{page}{1}
\maketitlesupplementary

\section{Proof of Proposition~\ref{prop:weight_compute}}
\label{sec:proof_weight_compute}

Define sets $\calK^{-k}_t:=\{1, ..., M_t\} \backslash \{k\}$, $\calJ^{-j}:=\{1, ..., N\} \backslash \{j\}$, and a function $\bar{\delta}_t:\calK^{-k}_t \rightarrow \calJ^{-j}$. Then, \eqref{eq:weight_def} can be rewritten as:
\begin{align}
    w^{t}_{k,j} &\propto \!\!\!\! \sum_{\delta \in \calD_t(k,j)} \prod_{k'=1}^{M_t} p\left(\bfz_{t,k'} | \delta_t(k')\right) \notag \\
    &= \sum_{\bar{\delta}_t} p(\bfz_k | \delta_t(k)=j) \prod_{\bar{k} \in \calK^{-k}_t} p \left(\bfz_{t,\bar{k}} | \bar{\delta}_t(\bar{k}) \right) \nonumber \\ 
    &= p(\bfz_k | \delta_t(k)=j) \sum_{\bar{\delta}_t} \prod_{\bar{k} \in \calK^{-k}_t} p(\bfz_{t,\bar{k}} | \bar{\delta}_t(\bar{k})) \nonumber \\ 
    &= Q^{t}(k, j) \per(Q^{t}_{-kj}),
\end{align}
where the derivation in the last line follows by the definition of the matrix permanent in \eqref{eq:permanent_def}.

\section{Proof of Proposition~\ref{prop:KF}}
\label{sec:proof_PDA_KF}

In this section, we show how optimizing \eqref{eq:EM_problem} leads to Proposition~\ref{prop:KF}. The objective function $f^{(i)}(q)$ in \eqref{eq:EM_problem} can be rewritten as:
\begin{align}
    f^{(i)}(q)\! =& \bbE_{\delta_{t+1}} \bbE_{\bar{\bfx}^{(i)}} \!\! \left\{ \bbE_{q} \!\! \left[ \! \log  p(\bar{\bfx}, \bfz_{t+1}, \bfu_{t}, \delta_{t+1}) | \bar{\bfx}^{(i)}, \bfz_{t+1} \! \right] \! \right\}   \notag \\
    &- \bbE_{\delta_{t+1}} \bbE_{\bar{\bfx}^{(i)}} \left\{ \bbE_q \left[ \log q \right] \right\} \notag \\
    =& \bbE_q \!\! \left\{ \! \bbE_{\delta_{t+1}} \! \bbE_{\bar{\bfx}^{(i)}} \!\! \left[\! \log p(\bar{\bfx}, \bfz_{t+1}, \bfu_{t}, \delta_{t+1}) | \bar{\bfx}^{(i)}, \bfz_{t+1} \! \right] \! \right\} \notag \\    
    &\underbrace{- \bbE_q \left[ \log q \right]}_{\text{entropy}} \notag \\
    =& \bbE_q \left[J(\bar{\bfx}) \right] \underbrace{- \frac{1}{2} \log \left( | \Sigma^{-1} | \right) + c}_{\text{entropy}},
\end{align}
where $c$ is some constant unrelated to $q$. To find the maximizer of \eqref{eq:EM_problem}, we take the gradient of $f(q)$ w.r.t. $\bfmu$ and $\Sigma$ of $q$:
\begin{align}
    \frac{\partial f}{\partial \boldsymbol{\mu}^\top} &=   \Sigma^{-1} \bbE_q \left[ (\bar{\bfx} - \boldsymbol{\mu}) J(\bar{\bfx}) \right] \label{eq:first_grad_mu} \\
    \frac{\partial^2 f}{\partial \boldsymbol{\mu}^\top \partial \boldsymbol{\mu}} &=  \Sigma^{-1} \bbE_q \left[ (\bar{\bfx} - \boldsymbol{\mu}) (\bar{\bfx} - \boldsymbol{\mu})^\top J(\bar{\bfx}) \right] \Sigma^{-1} \notag \\
    & - \Sigma^{-1} \bbE_q \left[ J(\bar{\bfx}) \right] \label{eq:second_grad_mu} \\
    \frac{\partial f}{\partial \Sigma^{-1}} &= - \frac{1}{2} \bbE_q \left[ (\bar{\bfx} - \boldsymbol{\mu}) (\bar{\bfx} - \boldsymbol{\mu})^\top J(\bar{\bfx}) \right] \notag \\ 
    &+ \frac{1}{2}\Sigma \bbE_q[J(\bar{\bfx})] - \frac{1}{2} \Sigma \label{eq:first_grad_Sigma}
\end{align}
Using \eqref{eq:second_grad_mu} and \eqref{eq:first_grad_Sigma}, we get the following relationship:
\begin{equation}
    \frac{\partial^2 f}{\partial \boldsymbol{\mu}^\top \partial \boldsymbol{\mu}} = -\Sigma^{-1} - 2 \Sigma^{-1} \frac{\partial f}{\partial \Sigma^{-1}} \Sigma^{-1}.
    \label{eq:grad_mu_Sigma_relation}
\end{equation}
To compute the above derivatives, we use of Stein's Lemma~\cite{stein1981estimation}. With our notation, the Lemma says:
\begin{equation}
    \bbE_q [(\bar{\bfx} - \boldsymbol{\mu}) J(\bar{\bfx})] = \Sigma 
 \bbE_q \left[ \frac{\partial J}{\partial \bar{\bfx}} \right].
 \label{eq:stein_lemma}
\end{equation}
To compute $\frac{\partial J}{\partial \bar{\bfx}}$, we first analyze the term $J(\bar{\bfx})$:
\begin{align}
    J(\bar{\bfx}) \! &= \!  \mathbb{E}_{\delta_{t+1}} \! \bbE_{\bar{\bfx}^{(i)}} \!\! \left[\! \log  p(\bar{\bfx}, \bfz_{t+1}, \bfu_{t}, \delta_{t+1}) | \bar{\bfx}^{(i)}\!, \bfz_{t+1} \! \right] \label{eq:KF_objective_decom} \\
    &= \!\! \sum_{\delta \in \calD_{t+1}} \!\!\! \bbE_{\bar{\bfx}^{(i)}} \left[ p(\delta | \bar{\bfx}^{(i)}, \bfz_{t+1}) \right] \log p(\bar{\bfx}, \bfz_{t+1}, \bfu_t, \delta) \notag \\
    &\propto \!\! \sum_{\delta \in \calD_{t+1}} \!\!\! p(\delta | \bfz_{t+1}) \log (p(\bfz_{t+1} | \bar{\bfx}, \delta) p(\bar{\bfx} | \bfu_t)) \notag \\
    &= \!\! \underbrace{\sum_{\delta \in \calD_{t+1}} \!\!\! p(\delta | \bfz_{t+1}) \log p(\bfz_{t+1} | \bar{\bfx}, \delta)}_{J_{\bfz_{t+1}}(\bar{\bfx})} + \underbrace{\vphantom{ \sum_{\delta \in \calD_{t+1}}} \log p(\bar{\bfx} | \bfu_t)}_{J_{\bfu_t}(\bar{\bfx})}. \notag
\end{align}
Focusing on the first term on the right, we get:
\begin{align}
    J_{\bfz_{t+1}} &(\bar{\bfx}) =  \sum_{\delta \in \calD_{t+1}} p(\delta | \mathbf{z}_{t+1}) \log p(\mathbf{z}_{t+1} | \mathbf{x}_{t+1}, \delta) \\
    &=\! \sum_{k=1}^{M_t} \sum_{\delta \in \calD_{t+1}} \!\!\! p(\delta | \mathbf{z}_{t+1}) \log p(\mathbf{z}_{t+1, k} | \bfx_{t+1, \delta(k)}) \notag \\
    &= \! \sum_{k=1}^{M_t} \! \sum_{j=1}^N  \sum_{\delta \in \calD_{t+1}(k, j)} \!\!\!\!\!\!\!\!\! p(\delta | \bfz_{t+1}) \! \log p(\mathbf{z}_{t+1, k} | \bfx_{t+1, j}), \notag
\end{align}
where $\calD_{t+1}(k, j)$ is the subset of all possible data associations that assign measurement $\mathbf{z}_{t+1,k}$ to variable $\bfx_{t+1, j}$.
Let $w^{t+1}_{k,j} \triangleq \sum_{\delta \in \calD_{t+1}(k, j)} p(\delta | \bfz_{t+1})$, which is the same as \eqref{eq:weight_def}. The above equation can be rewritten as: 
\begin{align}
    J_{\bfz_{t+1}} (\bar{\bfx}) &= \sum_{k=1}^{M_t} \sum_{j=1}^N w^{t+1}_{k,j} \log(\mathbf{z}_{t+1, k} | \bfx_{t+1, j}) \label{eq:Jy(X)} \\
    &\propto \sum_{k=1}^{M_t} \sum_{j=1}^N w^{t+1}_{k,j} (-\frac{1}{2} \|\mathbf{z}_{t+1, k} - H \bfx_{t+1, j}\|_{V}) \notag \\
    &= \sum_{k=1}^{M_t} \sum_{j=1}^N -\frac{1}{2} \|\mathbf{z}_{t+1, k} - H \bfx_{t+1, j}\|_{V / w^{t+1}_{k,j}}^2. \notag
\end{align}
%
%
Then, we analyze the second term of \eqref{eq:KF_objective_decom}:
\begin{align}
    J_{\bfu_t} (\bar{\bfx}) &= \log p(\bar{\bfx} | \bfu_t) \label{eq:ju(X)} \\
    &= \log p(\bfx_t) p(\bfx_{t+1} | \mathbf{x}_{t}, \mathbf{u}_t) \notag \\
    &= \log p(\bfx_t) + \log p(\bfx_{t+1} | \bfx_t, \mathbf{u}_t) \notag \\
    &\propto -\frac{1}{2} \| \bfx_t - \boldsymbol{\mu}_t \|^2_{\Sigma_t} -\frac{1}{2} \| \bfx_{t+1} -  F \bfx_t - G \bfu_t \|^2_{W}. \notag
\end{align}
Combining \eqref{eq:Jy(X)} and \eqref{eq:ju(X)}, \eqref{eq:KF_objective_decom} can be expanded as:
\begin{align}
    J(\bar{\bfx}) \propto &\sum_{k=1}^{M_t} \sum_{j=1}^N -\frac{1}{2} \|\mathbf{z}_{t+1, k} - H \bfx_{t+1, j}\|_{V / w^{t+1}_{k,j}}^2 \label{eq:JX_ex} \\
    &-\frac{1}{2} \| \bfx_t - \boldsymbol{\mu}_t \|^2_{\Sigma_t} -\frac{1}{2} \| \mathbf{x}_{t+1} -  F \mathbf{x}_t - G \mathbf{u}_t \|^2_{W}. \notag
\end{align}
To further simplify the objective, we stack all the known data into a lifted column $\bfy$ as
\begin{equation}
    \bfy = \begin{bmatrix} \boldsymbol{\mu}_t \\ G \mathbf{u}_t \\ \Bar{\bfz}_{t+1} \\\end{bmatrix},  
\end{equation}
where $\bar{\bfz}_{t+1}$ is an expanded measurement defined as:
\begin{equation}
    \Bar{\mathbf{z}}_{t+1} = (I_{M_{t+1}} \otimes \mathbf{1}_{N} \otimes I_{m}) \bfz_{t+1} =
        \begin{bmatrix}
             \mathbf{z}_{t+1,1}  \\
             \vdots \\
             \mathbf{z}_{t+1,1}
        \\
        \vdots \\
             \mathbf{z}_{t+1,M_{t+1}}  \\
             \vdots \\
             \mathbf{z}_{t+1,M_{t+1}}
     \end{bmatrix} \!\!\! \in \mathbb{R}^{m M_{t+1} N}
\end{equation}
where $\bfz_{t+1,k} \in \bbR^{m}$ for all $k$, $M_{t+1}$ is the number of measurements at time $t+1$, and $N$ is the number of states. Define the following expanded measurement model:
\begin{equation}
    \Bar{\mathbf{z}}_{t+1} = \Bar{H}_{t+1} \bfx_{t+1} + \Bar{\bfv}_{t+1}, \quad \Bar{\bfv}_{t+1} \sim \mathcal{N}(\mathbf{0}, \Bar{V}_{t+1}),
    \label{eq:expanded_obs_model_ap}
\end{equation}
where $\Bar{V}_{t+1}$ is defined in \eqref{eq:expanded_obs_cov} (changing $t$ to $t+1$) and $\Bar{H}_{t+1}$ is defined as below
\begin{equation}
    \Bar{H}_{t+1} \!=\! \mathbf{1}_{M_{t+1}} \otimes \bfI_N \otimes H \!=\! 
        \begin{bmatrix}
        H   & \     & \     \\
        \   & \ddots& \     \\
        \   & \     & H     \\
        \   & \vdots& \     \\
        H   & \     & \     \\
        \   & \ddots& \     \\
        \   & \     & H     \\
        \end{bmatrix} \in \mathbb{R}^{m M_{t+1} N \times nN}.
\end{equation}
%
%
We, then, define the following block-matrix quantities:
\begin{equation}
    A = \begin{bmatrix}
        I   & \mathbf{0} \\
        -F  & I \\
        \mathbf{0} & \Bar{H}_{t+1} \\
        \end{bmatrix}, \ R = \begin{bmatrix} 
        \Sigma_t & \mathbf{0} & \mathbf{0}   \\
        \mathbf{0} & W & \mathbf{0}     \\
        \mathbf{0} & \mathbf{0} &  \Bar{V}_{t+1} \\
        \end{bmatrix}.
    \label{eq:KF_matrices}    
\end{equation}
%
%
Then, \eqref{eq:JX_ex} can be written in a matrix form as follows:
\begin{equation}
    J(\bar{\bfx}) = -\frac{1}{2} (\bfy - A\bar{\bfx})^T R^{-1} (\bfy - A\bar{\bfx}).
\end{equation}
%
The derivative $\frac{\partial J}{\partial \bar{\bfx} }$ is then
\begin{equation}
    \frac{\partial J(\bar{\bfx})}{\partial \bar{\bfx}}  = A^\top R^{-1} (\mathbf{y} - A\bar{\bfx}).
    \label{eq:dJ_dx}
\end{equation}
Using Stein's Lemma \eqref{eq:stein_lemma} and \eqref{eq:dJ_dx}, the first-order derivative $\frac{\partial f}{\partial \boldsymbol{\mu}^\top}$ \eqref{eq:first_grad_mu} becomes
\begin{equation}
    \frac{\partial f}{\partial \boldsymbol{\mu}^\top} = \bbE_q \left[ \frac{\partial J}{\partial \bar{\bfx}} \right] = - A^\top R^{-1} (\mathbf{y} - A\boldsymbol{\mu}).
    \label{eq:first_grad_mu_2}
\end{equation}
The second-order derivative $\frac{\partial^2 f}{\partial \boldsymbol{\mu}^\top \boldsymbol{\mu}}$ is then
\begin{equation}
    \frac{\partial^2 f}{\partial \boldsymbol{\mu}^\top \boldsymbol{\mu}} = -A^\top R^{-1} A.
\end{equation}
To get the $\Sigma$ minimizer, by setting $\frac{\partial f}{\partial \Sigma^{-1}} = 0$ and using \eqref{eq:grad_mu_Sigma_relation}, we get
\begin{align}
    \Sigma^{-1} =& -\frac{\partial^2 f}{\partial \boldsymbol{\mu}^\top \boldsymbol{\mu}} \label{eq:Sigma_inv} \\
    =& \begin{bmatrix}
        \Sigma^{-1}_t + F^\top W^{-1} F & - F^\top W^{-1} \\
        -W^{-1}F & \Bar{H}_{t+1}^\top \bar{V}_{t+1}^{-1} \Bar{H}_{t+1} + W^{-1}
        \end{bmatrix}. \notag
\end{align}
Since we want to obtain the marginal distribution $q(\bfx_{t+1})$, we need the bottom right block of the inverse of the above matrix. We make use of the following two Lemmas.
\begin{lemma}
    \label{lemma:block_matrix_inv}
    Let $S$ be a $2 \times 2$ block matrix, 
    \begin{equation}
        S = 
        \begin{bmatrix}
            A       & C \\
            C^\top  & B
        \end{bmatrix}.
    \end{equation}
    If $A$ and $B$ are invertible, the inverse of $S$ is
    \begin{align}
        &S^{-1} = \\ 
        &\begin{bmatrix}
            (A \! - \! C B^{-1} C^\top)^{-1} & -A^{-1} C (B \! - \! C^\top \!\! A^{-1} C)^{-1} \\
            -(B \! - \! C^\top \!\! A^{-1} C)^{-1} C^\top\!\! A^{-1} & (B \! - \! C^\top \!\! A^{-1} C)^{-1}
        \end{bmatrix}. \notag
    \end{align}
\end{lemma}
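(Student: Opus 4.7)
The plan is to establish the block matrix inversion formula by performing a block LDU decomposition of $S$ and inverting each triangular and block-diagonal factor, then reconciling the resulting $(1,1)$ block with the claimed Schur-complement form via the Woodbury matrix identity. A direct verification by multiplying out $S \cdot S^{-1}$ would also work, but the decomposition route is cleaner and makes the origin of both Schur complements transparent.

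First, I would eliminate the lower-left block of $S$ using $A$, writing
\[
S = \begin{bmatrix} I & 0 \\ C^\top A^{-1} & I \end{bmatrix} \begin{bmatrix} A & 0 \\ 0 & B - C^\top A^{-1} C \end{bmatrix} \begin{bmatrix} I & A^{-1} C \\ 0 & I \end{bmatrix}.
\]
Each factor is trivially invertible (the triangular ones by flipping the sign of the off-diagonal block, the middle one block-wise since $A$ is invertible by hypothesis and the Schur complement $M := B - C^\top A^{-1} C$ is invertible because $S$ is). Multiplying the inverses in reverse order gives
\[
S^{-1} = \begin{bmatrix} A^{-1} + A^{-1} C M^{-1} C^\top A^{-1} & -A^{-1} C M^{-1} \\ -M^{-1} C^\top A^{-1} & M^{-1} \end{bmatrix},
\]
which already matches the $(1,2)$, $(2,1)$, and $(2,2)$ entries claimed in the lemma.

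The remaining step is to identify the $(1,1)$ block $A^{-1} + A^{-1} C M^{-1} C^\top A^{-1}$ with $(A - C B^{-1} C^\top)^{-1}$. This is exactly the Woodbury matrix identity applied with $U = C$, $V = C^\top$, and inner factor $-B^{-1}$. An equivalent route is to perform the symmetric LDU factorization that eliminates the upper-right block using $B$ instead, which produces $(A - C B^{-1} C^\top)^{-1}$ directly as the $(1,1)$ block; the other three entries then match after one further application of Woodbury.

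The main obstacle is really just bookkeeping: keeping track of which Schur complement appears in which block and verifying that the invertibility of $A$ and $B$, together with invertibility of $S$ itself, suffices to guarantee that both Schur complements $M$ and $A - C B^{-1} C^\top$ are invertible so that all the inverses appearing in the statement are well-defined. Once that is checked, the rest is a routine block-matrix multiplication plus one invocation of Woodbury.
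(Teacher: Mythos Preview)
Your proposal is correct and complete: the block LDU factorization immediately yields the $(1,2)$, $(2,1)$, and $(2,2)$ blocks, and the Woodbury identity (which is in fact Lemma~\ref{lemma:schur_complement_inv} of the paper) converts the $(1,1)$ block $A^{-1} + A^{-1} C M^{-1} C^\top A^{-1}$ into the stated form $(A - C B^{-1} C^\top)^{-1}$. Your remark that invertibility of $S$ is needed in addition to that of $A$ and $B$ (so that both Schur complements exist) is a legitimate observation about the lemma's hypotheses.

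As for comparison with the paper: the paper does not actually supply a proof of this lemma. It is stated without argument as a standard block-inversion identity and then invoked, together with Lemma~\ref{lemma:schur_complement_inv}, inside the proof of Proposition~\ref{prop:KF}. So there is no ``paper's proof'' to compare against; your LDU-plus-Woodbury derivation is exactly the kind of routine verification the paper implicitly relies on.
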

\begin{lemma}
    \label{lemma:schur_complement_inv}
    Given matrices $A\in \bbR^{N \times N}$, $B \in \bbR^{M \times M}$, $C \in \bbR^{N \times M}$, the inverse of $B  +  C^\top A^{-1} C$ is:
    \begin{equation}
        \begin{aligned}
        (B  +  &C^\top A^{-1} C)^{-1}  =  \\
        &B^{-1}  -  B^{-1} C^\top  (A  +  C B^{-1} C^\top)^{-1} C B^{-1}.
        \end{aligned}
    \end{equation}
\end{lemma}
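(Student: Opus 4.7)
The plan is to prove Lemma~\ref{lemma:schur_complement_inv} (a form of the Woodbury matrix identity) by direct algebraic verification: exhibit the candidate right inverse and check that the product with $B + C^\top A^{-1} C$ is the identity. First I would note the implicit hypotheses: $A$ and $B$ are assumed invertible (inherited from the context of Lemma~\ref{lemma:block_matrix_inv}), and $A + C B^{-1} C^\top$ must be invertible for the right-hand side to make sense.

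Let $P = B^{-1} - B^{-1} C^\top (A + C B^{-1} C^\top)^{-1} C B^{-1}$. I would expand $(B + C^\top A^{-1} C)\, P$ into four pieces,
\begin{equation*}
I \;-\; C^\top (A + C B^{-1} C^\top)^{-1} C B^{-1} \;+\; C^\top A^{-1} C B^{-1} \;-\; C^\top A^{-1} C B^{-1} C^\top (A + C B^{-1} C^\top)^{-1} C B^{-1}.
\end{equation*}
The key identity to apply is the factorization
\begin{equation*}
C^\top + C^\top A^{-1} C B^{-1} C^\top \;=\; C^\top A^{-1}\bigl(A + C B^{-1} C^\top\bigr),
\end{equation*}
which causes the inverse of $A + C B^{-1} C^\top$ to cancel when the second and fourth terms above are grouped. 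What remains is $I + C^\top A^{-1} C B^{-1} - C^\top A^{-1} C B^{-1} = I$, as desired. A symmetric computation on the other side (or appealing to the fact that a square matrix with a right inverse has that same matrix as its two-sided inverse) completes the argument.

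An alternative I considered is to apply Lemma~\ref{lemma:block_matrix_inv} to the block matrix $\bigl[\begin{smallmatrix} -A & C \\ C^\top & B \end{smallmatrix}\bigr]$ and equate two expressions for the bottom-right block of its inverse obtained by the two different Schur complements; this would produce the identity with $A$ replaced by $-A$ in the standard Lemma~\ref{lemma:block_matrix_inv} form, yielding exactly the stated formula after sign bookkeeping. I would prefer the direct verification because it is short and self-contained.

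I do not anticipate a real obstacle: the only subtlety is the factorization step above, and otherwise it is bookkeeping. The one place to be careful is the invertibility hypothesis on $A + C B^{-1} C^\top$; if that is not asserted separately, I would state it as a standing assumption (or, in the symmetric positive-definite context in which this lemma is used to extract the marginal covariance $\Sigma$ from \eqref{eq:Sigma_inv}, derive it from positive definiteness of the relevant blocks).
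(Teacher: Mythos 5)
Your verification is correct, but note that there is no paper proof to compare it against: Lemma~\ref{lemma:schur_complement_inv} is stated in the supplementary material without proof, invoked as the standard Sherman--Morrison--Woodbury identity, and used only to collapse \eqref{eq:original_mariginal_cov} into \eqref{eq:final_maginal_cov}--\eqref{eq:predicted_cov} (there with $A=\Sigma_t^{-1}$, $B=W$, $C=F^\top$). Your direct expansion is sound: the four-term product, the factorization $C^\top + C^\top A^{-1} C B^{-1} C^\top = C^\top A^{-1}\bigl(A + C B^{-1} C^\top\bigr)$ that cancels the inner inverse, and the appeal to the fact that a one-sided inverse of a square matrix is two-sided all check out, so your argument supplies the missing justification in a self-contained way. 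One refinement: the extra invertibility hypothesis you propose for $A + C B^{-1} C^\top$ need not be a standing assumption, since with $A$ and $B$ invertible one has $\det(A)\,\det\bigl(B + C^\top A^{-1} C\bigr) = \det\bigl(\begin{smallmatrix} A & -C \\ C^\top & B \end{smallmatrix}\bigr) = \det(B)\,\det\bigl(A + C B^{-1} C^\top\bigr)$, so $A + C B^{-1} C^\top$ is invertible exactly when the matrix being inverted on the left-hand side, $B + C^\top A^{-1} C$, is (and in the paper's application all of these are positive-definite covariance-type matrices, as you observed). Your sketched alternative via Lemma~\ref{lemma:block_matrix_inv} would also work, but as stated that lemma records only one expression for each block of the inverse, so ``equating two Schur-complement forms of the bottom-right block'' would require first deriving the second form (e.g., by applying the lemma after permuting the block rows and columns); the direct computation you chose avoids that bookkeeping and is the better choice here.
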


Using Lemma~\ref{lemma:block_matrix_inv} and \eqref{eq:Sigma_inv}, we get
\begin{align}
    \Sigma^{-1}_{t+1} =&  \Bar{H}_{t+1}^\top \bar{V}_{t+1}^{-1} \Bar{H}_{t+1} + W^{-1} \label{eq:original_mariginal_cov} \\
    &- W^{-1} F (\Sigma^{-1}_t + F^\top W^{-1} F)^{-1} F^\top W^{-1}. \notag
\end{align}
Then, using Lemma~\ref{lemma:schur_complement_inv}, we get
\begin{align}
    \Sigma^{-1}_{t+1} &= (\Sigma^{+}_{t+1})^{-1} + \Bar{H}_{t+1}^\top \Bar{V}_{t+1}^{-1} \Bar{H}_{t+1}, \label{eq:final_maginal_cov} \\
    \Sigma^{+}_{t+1} &= F \Sigma_{t} F^\top + W, \label{eq:predicted_cov}
\end{align}
where $\Sigma^{+}_{t+1}$ is the predicted covariance.
Setting $\frac{\partial f}{\partial \boldsymbol{\mu}^\top} = 0$ in \eqref{eq:first_grad_mu_2}, we have
\begin{equation}
    \begin{aligned}
        \begin{bmatrix}
        \Sigma^{-1}_t + F^\top W^{-1} F & - F^\top W^{-1} \\
        -W^{-1}F & W^{-1}+\Bar{H}_{t+1}^\top \bar{V}_{t+1}^{-1} \Bar{H}_{t+1}
        \end{bmatrix}
        \begin{bmatrix}
        \boldsymbol{\mu}'_{t} \\
        \boldsymbol{\mu}_{t+1}
        \end{bmatrix}& \\
        = \begin{bmatrix}
            \Sigma_t^{-1} \boldsymbol{\mu}_{t} - F^\top W^{-1} G \mathbf{u}_t \\
            W^{-1} G \mathbf{u}_t + \Bar{H}_{t+1}^\top \Bar{V}_{t+1}^{-1}\Bar{\mathbf{z}}_{t+1}
        \end{bmatrix}&.
    \end{aligned}
    \label{eq:}
\end{equation}
Note that $\boldsymbol{\mu}'_{t}$ is the mean of $\bfs_t$ conditioned a future measurement $\Bar{\bfz}_{t+1}$, which is not suitable to compute in online filtering. We marginalize it by left-multiplying both sides by
\begin{equation}
    D=
    \begin{bmatrix}
    I & \mathbf{0} \\
    W^{-1} F (\Sigma_{t}^{-1} + F^\top W^{-1} F)^{-1} & I
    \end{bmatrix}.
\end{equation}
%
%
The simplified result is
\begin{equation}
    \begin{aligned}
        \begin{bmatrix}
        \Sigma_{t}^{-1} + F^\top W^{-1} F & -F^\top W^{-1} \\
        \mathbf{0} & \underline{(\Sigma^+_{t+1})^{-1} + \Bar{H}_{t+1}^\top \Bar{V}_{t+1}^{-1} \Bar{H}_{t+1}} 
        \end{bmatrix}
        \begin{bmatrix}
        \boldsymbol{\mu}'_{t} \\
        \boldsymbol{\mu}_{t+1}
        \end{bmatrix}& \\
        =  \begin{bmatrix}
            \Sigma^{-1}_{t} \boldsymbol{\mu}_{t} - F^\top W^{-1} G \mathbf{u}_t \\
            \uwave{(\Sigma^+_{t+1})^{-1}(F \boldsymbol{\mu}_{t} + G \mathbf{u}_{t}) + \Bar{H}^\top_{t+1} \Bar{V}_{t+1}^{-1} \Bar{\mathbf{z}}_{t+1}}
            \end{bmatrix}&.
    \end{aligned}
    \label{eq:simplified_result}
\end{equation}
While other terms are straightforward to compute, we provide explanation for the two terms marked as $\underline{(\cdot)}$ and $\uwave{(\cdot)}$. 
The term $\underline{(\cdot)}$ can be obtained in the same way as \eqref{eq:original_mariginal_cov}, \eqref{eq:final_maginal_cov}, \eqref{eq:predicted_cov}. 
%
Then, let us focus on the $\uwave{(\cdot)}$ term, which is the second block of $D A^\top R^{-1} \bfy$, denoted as $[D A^\top R^{-1} \bfy]_2$. We have
\begin{align}
    &[D A^\top R^{-1} \bfy]_2 = W^{-1} F (\Sigma_t^{-1} + F^\top W^{-1} F)^{-1} \Sigma^{-1}_t \boldsymbol{\mu} \nonumber \\
    &\quad + (W^{-1} + - W^{-1}F(\Sigma_t^{-1} + F^\top W^{-1} F) F^\top W^{-1}) G \bfu_t \nonumber \\
    &\quad + \Bar{H}^\top_{t+1} \Bar{V}_{t+1}^{-1} \Bar{\mathbf{z}}_{t+1}.
    \label{eq:vector_original}
\end{align}
The second term in \eqref{eq:vector_original} can be simplified in the same way as \eqref{eq:original_mariginal_cov}, \eqref{eq:final_maginal_cov}, \eqref{eq:predicted_cov} into $(\Sigma_{t+1}^+)^{-1} G \bfu_t$. The first term in \eqref{eq:vector_original} can be written as
\begin{align}
    &\quad W^{-1} F (\Sigma_t^{-1} + F^\top W^{-1} F)^{-1} \Sigma^{-1}_t \boldsymbol{\mu} \nonumber \\
    &= W^{-1} F [\Sigma_t - \Sigma_t F^\top (W + F \Sigma_t F^\top)^{-1} F \Sigma_t] \Sigma_t^{-1} \boldsymbol{\mu}_t \nonumber \\
    &= W^{-1} F [\Sigma_t - \Sigma_t F^\top (\Sigma^+_{t+1})^{-1} F \Sigma_t] \Sigma_t^{-1} \boldsymbol{\mu}_t \nonumber \\
    &= [W^{-1}  - W^{-1} F \Sigma_t F^\top (\Sigma^+_{t+1})^{-1}] F \boldsymbol{\mu}_t \nonumber \\
    &= [W^{-1} - W^{-1} (\Sigma^+_{t+1} - W) (\Sigma^+_{t+1})^{-1}] F \boldsymbol{\mu}_t \nonumber \\
    &= (\Sigma^+_{t+1})^{-1} F \boldsymbol{\mu}_t.
\end{align}
This way, \eqref{eq:vector_original} is simplified to the $\uwave{(\cdot)}$ term in \eqref{eq:simplified_result}.

The term $\boldsymbol{\mu}^+_{t+1} = F \boldsymbol{\mu}_{t} + G \mathbf{u}_{t}$ is the predicted mean. Bringing all of the above together, we have the recursive filtering as follows:
\begin{subequations}
	\label{eq:recur_filter}
	\begin{align}
		\text{Predictor}: \notag \\ \quad \Sigma^+_{t+1} &= F \Sigma_{t} F^\top + W, \tag{\ref{eq:recur_filter}{a}} \label{eq:recur_filter_a} \\
		\boldsymbol{\mu}^+_{t+1} &= F \boldsymbol{\mu}_{t} + G \mathbf{u}_t. \tag{\ref{eq:recur_filter}{b}} \label{eq:recur_filter_b} \\
		\text{Corrector}: \notag \\ \quad \Sigma_{t+1}^{-1} &= (\Sigma^+_{t+1})^{-1} + \Bar{H}_{t+1}^\top \Bar{V}_{t+1}^{-1} \Bar{H}_{t+1}, \tag{\ref{eq:recur_filter}{c}} \label{eq:recur_filter_c} \\
		\Sigma_{t+1}^{-1} \boldsymbol{\mu}_{t+1} &= (\Sigma^+_{t+1})^{-1} \boldsymbol{\mu}^+_{t+1} + \Bar{H}_{t+1}^T \Bar{V}_{t+1}^{-1} \Bar{\mathbf{z}}_{t+1}.  
	\tag{\ref{eq:recur_filter}{d}} \label{eq:recur_filter_d} 
	\end{align}
\end{subequations}
To obtain the canonical form, we first define the Kalman gain $\bar{K}_{t+1}$ as
\begin{equation}
    \bar{K}_{t+1} = \Sigma_{t+1} \bar{H}_{t+1}^\top \bar{V}_{t+1}^{-1},
\end{equation} 
then we do the following manipulation:
\begin{equation}
    \begin{aligned}
    I &= \Sigma_{t+1}\left[(\Sigma_{t+1}^+)^{-1} + \bar{H}_{t+1}^\top \bar{V}_{t+1}^{-1} \bar{H}_{t+1}\right] \\
    I &= \Sigma_{t+1} (\Sigma_{t+1}^+)^{-1} + \bar{K}_{t+1} \bar{H}_{t+1} \\
    \Sigma_{t+1} &= (I - \bar{K}_{t+1} \bar{H}_{t+1}) \Sigma_{t+1}^+ \\
    \bar{K}_{t+1} \bar{V}_{t+1} &= (I - \bar{K}_{t+1} \bar{H}_{t+1}) \Sigma_{t+1}^+ \bar{H}_{t+1}^\top.
    \end{aligned}
\end{equation}
Thus, we obtain the canonical Kalman gain
\begin{equation}
    \bar{K}_{t+1} = \Sigma^+_{t+1} \Bar{H}_{t+1}^\top (\Bar{H}_{t+1} \Sigma^+_{t+1} \Bar{H}_{t+1}^\top + \Bar{V}_{t+1})^{-1},
\end{equation}
and rewrite the recursive filter in canonical form:
\begin{subequations}
    \label{eq:KF_ap}
    \begin{align}
        \text{predictor:} \notag \\ \quad 
        \boldsymbol{\mu}^+_{t+1} &= F \boldsymbol{\mu}_{t} + G \mathbf{u}_t, \tag{\ref{eq:KF_ap}{a}} \label{eq:KF_ap_a} \\
        \quad  \Sigma^+_{t+1} &= F \Sigma_{t} F^T + W, \tag{\ref{eq:KF_ap}{b}} \label{eq:KF_ap_b} \\
        \text{corrector:} \notag \\ \quad \bfmu_{t+1} &= \bfmu_{t+1}^+ + \bar{K}_{t+1} (\bar{\bfz}_{t+1} - \bar{H}_{t+1} \bfmu_{t+1}^+),
        \tag{\ref{eq:KF_ap}{c}} \label{eq:KF_ap_c} \\
        \Sigma_{t+1} &= (I - \bar{K}_{t+1} \bar{H}_{t+1}) \Sigma_{t+1}^+. \tag{\ref{eq:KF_ap}{d}} \label{eq:KF_ap_d}
    \end{align}
\end{subequations}

\section{Motion and measurement model matrices}
\label{sec:motion_measurement_models}

In this section, we provide details for the motion and measurement matrices. For the motion model, since we store the velocities in the state, the motion model is $\bfx_{t+1} = [u_t + \dot{u}_t, v_t+\dot{v}_t, s_t+\dot{s}_t, r_t, \dot{u}_t, \dot{v}_t, \dot{s}_t ]$, and there are no control inputs, so we can set $G=0$. We have
\begin{align}
    F = I_{7\times7} + \begin{bmatrix}
        0_{3\times4} & I_{3\times3} \\
        0_{4\times4} & 0_{3\times3}
    \end{bmatrix},  G=0,  W = \begin{bmatrix}
        I_{4\times4} & 0_{4\times3} \\
        0_{3\times4} & 0.01 I_{3\times3}
    \end{bmatrix}. \notag
\end{align}
For the measurement model, the detections are converted into $\bfz = [u, v, s, r]^\top$, and we have
\begin{align}
    H = \begin{bmatrix}
        I_{4\times4} & 0_{4\times3}
    \end{bmatrix}, \; V = \begin{bmatrix}
        I_{2\times2} & 0_{2\times2} \\
        0_{2\times2} & 10 I_{2\times2}
    \end{bmatrix} \notag
\end{align}

\end{document}